\documentclass[11pt]{article} % For LaTeX2e
\usepackage[utf8]{inputenc} % allow utf-8 input
\usepackage[T1]{fontenc}    % use 8-bit T1 fonts
\usepackage{hyperref}       % hyperlinks
\usepackage{url}            % simple URL typesetting
\usepackage{booktabs}       % professional-quality tables
\usepackage{amsfonts}       % blackboard math symbols
\usepackage{nicefrac}       % compact symbols for 1/2, etc.
\usepackage{microtype}      % microtypography
\usepackage{amsmath} 
\usepackage{graphicx}
\usepackage{amsthm}
\usepackage{multirow}
\usepackage{caption,subfigure}
\usepackage{algorithm}
\usepackage{algorithmic}

\def\Re{\mathbb{R}}

\def\A0{{\sf C}}

\def\tA0{\tilde{\sf C}}

\def\argmin{\mathop{\text{\rm arg\,min}}}

\usepackage{eufrak}

\def\Sec#1{Sec.~\ref{#1}}

\def\notes#1{\marginpar{\tiny #1}\typeout{Notes!
Notes!
Notes!
}}
\renewcommand{\notes}[1]{\typeout{notes!}}

\newcommand{\field}[1]{\mathbb{#1}}

\def\Re{\field{R}}

\def\Sec#1{Sec.~\ref{#1}}

%%%% Chi's new definitions

%% Special math fonts

%% Highly frequent notations
 % Gain
 % Coordinates of gain
 % Control
 
 % Identity matrix
 % Innovation
 % Weighted Laplacian

% \newcommand{\ith}{i^{\text{th}}}

%% Ubiquitous notations
 % General state
 % General state as RV
 % General state on SO(3)
 % Index for vector fields in process noise
 % Index for vector fields in process noise
 % Index for vector fields in observation
 % Deviation of concentrated distributions

 % Mean of distributions / Filter estimate

%% For kernel-based gain

%% Kernel gain - Amir

\newcommand{\keps}{{k}_{\epsilon}}

%\newcommand\{\geps}{{g}_{\epsilon}}

%% Text symbols

\def\exp{\text{exp}}

 %So we can change symbol easily later

% Define Roman numbers
\makeatletter

\newcommand{\Rom}[1]{\expandafter\@slowromancap\romannumeral #1@}
\makeatother

\newcounter{rmnum}

\newenvironment{romannum}{\begin{list}{{\upshape (\roman{rmnum})}}{\usecounter{rmnum}
\setlength{\leftmargin}{17pt}
\setlength{\rightmargin}{4pt}
\setlength{\itemindent}{-1pt}
}}{\end{list}}

\newcounter{anum}

% \renewcommand{\baselinestretch}{1.28}

%%%%%

\newcommand{\ud}{\,\mathrm{d}}

\newcommand{\pr}{\rho}

\def\Expect{{\sf E}}

%%%% Chi's newcommands

\newcommand{\lr}[2]{\langle #1, #2 \rangle}

\newcommand{\newP}[1]{\noindent{\bf #1:}}

\def\Expect{{\sf E}}

\usepackage{ulem}

\newtheorem{theorem}{Theorem}

\newtheorem{proposition}{Proposition}

\newtheorem{remark}{Remark}

\makeatletter
\newsavebox\myboxA
\newsavebox\myboxB
\newlength\mylenA

\newcommand*\xoverline[2][0.75]{%
    \sbox{\myboxA}{$\m@th#2$}%
    \setbox\myboxB\null% Phantom box
    \ht\myboxB=\ht\myboxA%
    \dp\myboxB=\dp\myboxA%
    \wd\myboxB=#1\wd\myboxA% Scale phantom
    \sbox\myboxB{$\m@th\overline{\copy\myboxB}$}%  Overlined phantom
    \setlength\mylenA{\the\wd\myboxA}%   calc width diff
    \addtolength\mylenA{-\the\wd\myboxB}%
    \ifdim\wd\myboxB<\wd\myboxA%
       \rlap{\hskip 0.5\mylenA\usebox\myboxB}{\usebox\myboxA}%
    \else
        \hskip -0.5\mylenA\rlap{\usebox\myboxA}{\hskip 0.5\mylenA\usebox\myboxB}%
    \fi}
\makeatother

\def\argmax{\mathop{\text{\rm arg\,max}}}

\newcommand{\Pspace}{{\cal P}_{\text{ac},2}}
\newcommand{\LX}{{\sf L}}
\newcommand{\LP}{{\cal L}}
\newcommand{\HP}{{\cal H}}
\newcommand{\HX}{{\sf H}}
\newcommand{\JX}{{\sf J}}

\newcommand{\fSpace}{\mathcal{C}(\Re^d;\Re)}

\newcommand{\geps}{g_\epsilon}
\newcommand{\FP}{{\sf F}} 
% Optional math commands from https://github.com/goodfeli/dlbook_notation.
%\input{math_commands.tex}

\setlength{\oddsidemargin}{0cm} \setlength{\evensidemargin}{0cm}
\setlength{\topmargin}{-.25cm}  % -2 works for ps but not dvi or pdf!
\setlength{\textheight}{21.5cm} \setlength{\textwidth}{15.7cm}
\title{Accelerated Flow for Probability Distributions}

% Authors must not appear in the submitted version. They should be hidden
% as long as the \iclrfinalcopy macro remains commented out below.
% Non-anonymous submissions will be rejected without review.

\author{
	Amirhossein Taghvaei \& Prashant G. Mehta \thanks{Coordinated Science Laboratory, University of Illinois at Urbana-Champaign}	
}

%
%\author{Antiquus S.~Hippocampus, Natalia Cerebro \& Amelie P. Amygdale \thanks{ Use footnote for providing further information
%about author (webpage, alternative address)---\emph{not} for acknowledging
%funding agencies.  Funding acknowledgements go at the end of the paper.} \\
%Department of Computer Science\\
%Cranberry-Lemon University\\
%Pittsburgh, PA 15213, USA \\
%\texttt{\{hippo,brain,jen\}@cs.cranberry-lemon.edu} \\
%\And
%Ji Q. Ren \& Yevgeny LeNet \\
%Department of Computational Neuroscience \\
%University of the Witwatersrand \\
%Joburg, South Africa \\
%\texttt{\{robot,net\}@wits.ac.za} \\
%\AND
%Coauthor \\
%Affiliation \\
%Address \\
%\texttt{email}
%}

% The \author macro works with any number of authors. There are two commands
% used to separate the names and addresses of multiple authors: \And and \AND.
%
% Using \And between authors leaves it to \LaTeX{} to determine where to break
% the lines. Using \AND forces a linebreak at that point. So, if \LaTeX{}
% puts 3 of 4 authors names on the first line, and the last on the second
% line, try using \AND instead of \And before the third author name.

%\newcommand{\fix}{\marginpar{FIX}}
%\newcommand{\new}{\marginpar{NEW}}

%\iclrfinalcopy % Uncomment for camera-ready version, but NOT for submission.

\begin{document}
	
	\maketitle
	
	\begin{abstract}
		This paper presents a methodology and numerical algorithms for constructing accelerated gradient flows on the space of probability distributions.  In
		particular, we 
		extend the recent variational formulation of
		accelerated gradient methods in~\cite{wibisono2016}
		from vector valued variables to probability
		distributions.  The variational problem
		is modeled as a mean-field optimal control problem.
		The maximum principle of optimal control theory is used to derive Hamilton's
		equations for the optimal gradient flow. 
		The Hamilton's equation are shown to achieve the
		accelerated form of density
		transport from any initial
		probability distribution to a target probability distribution.  
		A quantitative estimate on the asymptotic convergence rate is provided
		based on a Lyapunov function construction, when the
		objective functional is displacement convex.  Two
		numerical approximations are presented to implement the
		Hamilton's equations as a system of $N$ interacting
		particles.  The continuous limit of the Nesterov's
		algorithm is shown to be a special case with $N=1$. The algorithm is illustrated with
		numerical 
		examples.    
	\end{abstract}
	
	\section{Introduction}
	
	% The
	% proposed work is motivated by two application areas: 
	Optimization on the space of probability distributions is important to
	a number of machine learning models including variational
	inference~\cite{blei2017variational}, generative
	models~\cite{goodfellow2014generative,arjovsky2017wasserstein}, and
	policy optimization in reinforcement learning~\cite{sutton2000policy}.
	A number of recent studies have considered solution approaches to
	these problems based upon a construction of gradient flow on the space
	of probability
	distributions~\cite{zhang2018policy,liu2016stein,frogner2018approximate,chizat2018global,richemond2017wasserstein,chen2018unified}.
	Such constructions are useful for convergence analysis as
	well as development of numerical algorithms.
	%, e.g., accelerated stochastic gradient descent~\cite{jain2017accelerating} and Hamiltonian Markov
	%	chain Monte-Carlo (MCMC)
	%        methods~\cite{neal2011mcmc,cheng2017underdamped}. 
	
	In this paper, we propose a methodology and numerical algorithms that
	achieve {\em accelerated} gradient flows on the space of probability
	distributions.  The proposed numerical algorithms are related to yet
	distinct from the accelerated
	stochastic gradient descent~\cite{jain2017accelerating} and
	Hamiltonian Markov chain Monte-Carlo (MCMC)
	algorithms~\cite{neal2011mcmc,cheng2017underdamped}. 
	The proposed methodology extends the variational formulation of~\cite{wibisono2016}
	from vector valued variables to probability distributions.  The
	original formulation of~\cite{wibisono2016} was used to derive and
	analyze the convergence properties of a
	large class of accelerated optimization algorithms, most significant
	of which is the continuous-time limit of the Nesterov's
	algorithm~\cite{su2014}.  In this paper, the limit is referred to as
	the Nesterov's ordinary differential equation (ODE).  % It
	% is also useful for convergence analysis based upon the Lyapunov
	% function method.
	
	The extension proposed in our work is based upon a generalization of
	the formula for the Lagrangian in~\cite{wibisono2016}: (i) the kinetic energy
	term is replaced with the expected value of kinetic energy; and (ii)
	the potential energy term is
	replaced with a suitably defined functional on the space of
	probability distributions.  % An important example of this functional is
	% the relative entropy functional defined with respect to a target
	% distribution.  
	The variational problem is to obtain a trajectory in
	the space of probability distributions that minimizes the action
	integral of the Lagrangian.

	% The problem is similar to the continuous-time formulation of the optimal transport problem~\cite{benamou1999numerical}.  
	
	The variational problem is modeled as a mean-field optimal problem.
	The maximum principle of the optimal control theory is
	used to derive the Hamilton's equations which represent the first
	order optimality conditions.  The Hamilton's equations provide a
	generalization of the Nesterov's ODE to the space of probability
	distributions.  A candidate Lyapunov function is proposed for the
	convergence analysis of the solution of the Hamilton's equations.  In
	this way, quantitative estimates on convergence rate are obtained for
	the case when the objective functional is displacement
	convex~\cite{mccann1997convexity}. Table~\ref{tab:summary} provides a
	summary of the relationship between the original variational formulation
	in~\cite{wibisono2016} and the extension proposed in this paper.

	We also consider the important special case when the objective
	functional is the relative entropy functional ${\sf
		D}(\rho|\rho_\infty)$ defined with respect to a
	target probability distribution $\rho_\infty$.  In this case, the
	accelerated gradient flow is shown to be 
	related to the continuous limit of the Hamiltonian Monte-Carlo
	algorithm~\cite{cheng2017underdamped}
	(Remark~\ref{rem:relative-entropy}).  % The
	% assumption that the functional is displacement convex is equivalent to
	% the assumption that the target distribution is log-concave. 
	%  (see
	% Section \ref{sec:relative-entropy})
	The Hamilton's equations are finite-dimensional for the
	special case when the initial and the 
	target probability distributions are both Gaussian. In this case, the mean evolves according to the Nesterov's
	ODE.  For the general case, the Lyapunov function-based convergence
	analysis applies when the target distribution is log-concave.

	%~\cite
	
	%The probabilistic formulation is used in the construction of interacting particle system numerical algorithms. 
	As a final contribution, the proposed methodology is used to obtain a
	numerical algorithm.  The algorithm is an interacting
	particle system that empirically approximates the distribution with a 
	finite but large number of $N$ particles.  The
	difficult part of this construction is the approximation of the
	interaction term between particles.  For this purpose, two
	types of approximations are described: (i) Gaussian approximation
	which is asymptotically (as $N\rightarrow\infty$) exact 
	in Gaussian settings; and (ii) Diffusion map approximation which is
	computationally more demanding but asymptotically exact for a 
	general class of distributions.

	The outline of the remainder of this paper is as follows:
	\Sec{sec:review}
	provides a brief review of the variational formulation
	in~\cite{wibisono2016}. The proposed extension to the space of
	probability distribution appears in~\Sec{sec:main}
	where the main result is also described.  The numerical algorithm along with the results of numerical experiments
	appear in~\Sec{sec:numerical}. Comparisons with MCMC and Hamiltonian MCMC are also described. The conclusions appear in \Sec{sec:conclusion}. 
%	The Appendix contains the
%	details of the proof of the main result. 
	%The proof of the results appear in the Appendix.
	%	in~\Sec{sec:proof}.  
	
	\newP{Notation} The gradient and divergence operators are
	denoted as $\nabla$ and $\nabla \cdot$ respectively. With
	multiple variables, $\nabla_z$ denotes the
	gradient with respect to the variable $z$.  Therefore, the divergence of the vector field $U$ is $\nabla \cdot U(x)= \sum_{n=1}^d \nabla_{x_n}U_n(x)$. 
	The space of absolutely continuous probability measures on $\Re^d$ with finite second moments is denoted by $\Pspace(\Re^d)$. 
	%	For a measure $\mu \in \Pspace(\Re^d)$ and a measurable map $T:\Re^d \to \Re^d$, the push-forward of $\mu$ by $T$ is denoted by $T{\#} \mu$. 
	%	The second-order Wasserstein distance between any two measures $\mu,\nu \in \Pspace(\Re^d)$ is denoted as $W_2(\mu,\nu)$. 
	The Wasserstein gradient 
	and the Gâteaux derivative 
	of a functional ${\sf F}$ is denoted as
	$\nabla_\rho {\sf F}(\rho)$
	and $\frac{\partial {\sf F}}{\partial \rho} (\rho)$ respectively~(see Appendix~\ref{apdx:derivative} for definition). 
	% The space of continuous functions from topological space $\cal{X}$ to topological space $\calY$ is denoted as $\mathcal{C}(\calX,\calY)$. 
	The probability distribution of a random variable $Z$ is denoted as $\text{Law}(Z)$.
	
	{\renewcommand{\arraystretch}{1.1}
	\begin{table}[h]
		\centering
		\begin{tabular}{l|c|c}
			&  Vector & Probability distribution \\ \hline
			State-space &  $\Re^d$ & $\mathcal{P}_2(\Re^d)$\\  
			Objective function  &  $f(x)$ & ${\sf F}(\pr) := {\sf D}(\pr\vert\pr_\infty)$\\    
%			Gradient flow &$\dot{X}_t =-\nabla f(X_t)$ & $\ud X_t = -\nabla f(X_t)\ud t + \sqrt{2}\ud B_t$ \\ 
			Lagrangian &
			$e^{\alpha_t+\gamma_t}\left(\frac{1}{2}|e^{-\alpha_t}u|^2
			- e^{\beta_t}f(x)\right)$ &
			$e^{\alpha_t+\gamma_t}\Expect\left[\frac{1}{2}|e^{-\alpha_t}U|^2
			- e^{\beta_t}\log(\frac{\pr(X)}{\pr_\infty(X)})\right]$
			\\
			\multirow{2}{*}{Lyapunov funct.} & $
			\frac{1}{2}\left|x+e^{-\gamma_t}y
			- \bar{x}\right|^2 $
			&
			$\frac{1}{2}\Expect [|X_t + e^{-\gamma_t}Y_t - T_{\pr_t}^{\pr_\infty}(X_t)|^2] $
			\\
			& $ + e^{\beta_t}(f(x)-f(\bar{x}))$
			&
			$ + e^{\beta_t}({\sf F}(\pr_t)-{\sf F}(\pr_\infty))$
			%		\\
			%		convergence rate & $f(x_t)-f(\bar{x})\leq O(e^{-\beta_t}) $ & $F(\pr_t)-F(\pr_\infty)\leq O(e^{-\beta_t})$
			%		\medskip   			
		\end{tabular}
		\caption{Summary of the variational formulations for vectors and probability distributions. }
		\label{tab:summary}
	\end{table}
	{\renewcommand{\arraystretch}{1}

	\section{Review of the variational formulation of~\cite{wibisono2016}}\label{sec:review}
	% Consider the real-valued function $f:\Re^d\to\Re$. Throughout the paper we make the following assumption:
	% \begin{assumption}
	% The function $f$ is smooth and convex. 
	% \end{assumption}
	
	The basic problem is to minimize a $C^1$ smooth convex function $f$ on $\Re^d$.
%	$
%	\min_{x\in \Re^d} f(x)\label{eq:opt-problem}
%	$. 
	The standard form of the gradient descent algorithm for this problem
	is an ODE:
	\begin{equation}
	\frac{\ud X_t}{\ud t} = -\nabla f(X_t),\quad t \geq 0\label{eq:grad-flow}
	\end{equation}

	Accelerated forms of this algorithm are obtained based on a
	variational formulation due to~\cite{wibisono2016}.  The formulation
	is briefly reviewed here using an optimal control formalism. The
	Lagrangian $L:\Re^+\times \Re^d  \times \Re^d \to \Re$   is defined as
	\begin{equation}
	L(t,x,u) := e^{\alpha_t+\gamma_t}\bigg(\underbrace{\frac{1}{2}|e^{-\alpha_t}u|^2}_{\text{kinetic energy}} - \underbrace{e^{\beta_t}f(x)}_{\text{potential energy}}\bigg)
	\label{eq:Lagrangian}
	\end{equation}
	where $t \geq 0$ is the time, $x \in \Re^d$ is the state, $u \in \Re^d$ is the velocity or control input, and the time-varying parameters $\alpha_t,\beta_t,\gamma_t$ satisfy the following scaling conditions: $\alpha_t = \log p - \log t$, $\beta_t = p\log t + \log C$, and $\gamma_t = p \log t$ where $p\geq 2$ and $C>0$ are constants. % The case $p=2$ corresponds to the continuous-time limit of the Nestrov's accelerated algorithm. 
	
	The variational problem is
	\begin{equation}
	\begin{aligned}
	\underset{u}{\text{Minimize}}\quad&J(u)=\int_{0}^\infty L(t,X_t,u_t) \ud t \\
	\text{Subject to}\quad& \frac{\ud X_t}{\ud t} = u_t,\quad X_0 =x_0
	\end{aligned}
	\label{eq:var-problem}
	\end{equation}
	% where $u_t$ is the control input chosen to minimize
	% the objective function $J(u)$.  
	%over all  control laws  $\{u_t\}_{t >0}$ in $\Re^d$. 
	The Hamiltonian function is
	\begin{equation}
	H(t,x,y,u) = y  \cdot u  - L(t,x,u)
	\end{equation}
	where $y \in \Re^d$ is dual variable and $y\cdot u$ denotes the dot
	product between vectors $y$ and $u$.
	
	According to the Pontryagin's Maximum Principle, the optimal control
	$u^*_t = \underset{v}{\argmax}~ H(t,X_t,Y_t,v) = e^{\alpha_t-\gamma_t}Y_t$.  The
	resulting Hamilton's equations are
	\begin{subequations}
		\begin{align}
		\frac{\ud X_t}{\ud t} &=+ \nabla_y H(t,X_t,Y_t,u_t)= e^{\alpha_t-\gamma_t}Y_t,\quad X_0=x_0 \\
		\frac{\ud Y_t}{\ud t} &=-\nabla_x H(t,X_t,Y_t,u_t)= - e^{\alpha_t+\beta_t+\gamma_t}\nabla f(X_t),\quad Y_0=y_0 
		%\ddot{X}_t + (e^{\alpha_t}-\dot{\alpha}_t)\dot{X}_t + e^{2\alpha_t + \beta_t}\nabla f(X_t) = 0
		\end{align}
		\label{eq:acc-grad-flow}	
	\end{subequations}
	The system~\eqref{eq:acc-grad-flow} is an example of accelerated
	gradient descent algorithm.  Specifically, if the parameters
	$\alpha_t,\beta_t,\gamma_t$ are defined using $p=2$, one obtains the
	continuous-time limit of the Nesterov's accelerated algorithm.  It is
	referred to as the Nesterov's ODE in this paper.

	For this system, a Lyapunov function is as follows:
	\begin{equation}
	V(t,x,y)= \frac{1}{2}\left|x+e^{-\gamma_t}y - \bar{x}\right|^2 + e^{\beta_t}(f(x)-f(\bar{x}))\label{eq:Lyp}
	\end{equation}
	where $\bar{x} \in \argmin_x f(x)$. It is shown in~\cite{wibisono2016}
	that upon differentiating along the solution trajectory, 
	%Upon differentiating $V(X_t,\dot{X}_t,t)$ with respect to $t$ we have 
	$
	\frac{\ud}{\ud t}V(t,X_t,Y_t) \leq 0
	$.  This yields the following convergence rate:
	\begin{equation}
	f(X_t)- f(\bar{x}) \leq O(e^{-\beta_t}),\quad \forall t\geq 0
	\label{eq:conv-det}
	\end{equation}

	\section{Variational formulation for probability
		distributions} \label{sec:main}
	
	\subsection{Motivation and background} 
	Let ${\sf F}:\Pspace(\Re^d)\to \Re$ be a functional on the space of probability distributions. Consider the problem of  minimizing ${\sf F}(\pr)$.
	% problem:
	%\begin{equation*}
	%\inf_{\pr \in \Pspace(\Re^d)}~F(\pr)
	%\end{equation*}
	The (Wasserstein) gradient flow with respect to ${\sf F}(\pr)$ is
	\begin{equation}\label{eq:grad-flow-pde}
	\frac{\partial \pr_t}{\partial t} = \nabla \cdot(\pr_t \nabla_\pr {\sf F}(\pr_t))
	\end{equation}
	where $\nabla_\pr {\sf F}(\pr)$ is the Wasserstein gradient of ${\sf F}$. 
	% (see Appendix~\ref{apdx:optimal-transport} for definition). 
	
	An important example is the relative entropy
	functional where ${\sf F}(\pr)= {\sf D}(\pr \vert
	\pr_\infty):=\int_{\Re^d} \log(\frac{\pr(x)}{\pr_\infty(x)})\pr(x)\ud
	x$ where $\pr_\infty \in \Pspace(\Re^d)$ is referred to as the target distribution.  The
	gradient of relative entropy is given by $\nabla_\pr {\sf F}(\pr) = \nabla
	\log(\frac{\pr}{\pr_\infty})$. The gradient flow 
	\begin{equation}\label{eq:Fokker-Plank}
	\frac{\partial
		\pr_t}{\partial t} = -\nabla \cdot(\pr_t\nabla \log(\pr_\infty)) +
	\Delta \pr_t
	\end{equation} 
	is the Fokker-Planck
	equation~\cite{jordan1998variational}.
	The gradient flow achieves the density
	transport from an initial probability distribution $\pr_0$ to the
	target (here, also equilibrium) probability distribution $\pr_\infty$; and underlies the
	construction and the analysis of Markov
	chain Monte-Carlo (MCMC) algorithms.  The simplest MCMC algorithm is the Langevin stochastic differential equation (SDE):
	\begin{equation*}
	\ud X_t = -\nabla f(X_t)\ud t + \sqrt{2}\ud B_t,\quad X_0\sim\pr_0
	\end{equation*}
	where $B_t$ is the standard Brownian motion in $\Re^d$. 
	% assumed to be
	% independent of the initial condition $X_0$ which is sampled from the initial
	% distribution $\pr_0$. 
	
	% For this case, the objective of this paper is related to constructing accelerated counterparts of the Langevin sde. 
	
	% Also it is known that the relative entropy is displacement convex if
	% $\pr_\infty$ is log-concave~\cite[Thm. 5.15]{villani2003topics}.  
	
	The main problem of this paper is to construct an accelerated form of
	the gradient flow~\eqref{eq:grad-flow-pde}.  
	% Such construction is insightful for studying accelerated stochastic gradient descent methods~\cite{jain2017accelerating} and Hamiltonian Markov chain Monte-Carlo (MCMC) methods~\cite{neal2011mcmc,cheng2017underdamped}.  
	The proposed
	solution is based upon a variational formulation.  As tabulated in
	Table~\ref{tab:summary}, the solution represents a generalization
	of~\cite{wibisono2016} from its original deterministic
	finite-dimensional to now probabilistic infinite-dimensional settings. 
	
	The variational problem can be expressed in two equivalent forms: (i)
	The probabilistic form is described next in the main body of the
	paper; and (ii) The partial differential equation (PDE) form appears in the
	Appendix.  The probabilistic form is stressed here because it represents
	a direct generalization of the Nesterov's ODE and because it is closer
	to the numerical algorithm.

	\subsection{Probabilistic form of the variational problem}\label{sec:prob}
	Consider the stochastic process $\{X_t\}_{t\geq 0}$
	that takes values in $\Re^d$ and evolves according to:
	\begin{equation*}
	\frac{\ud X_t}{\ud t} = U_t,\quad X_0 \sim \rho_0
	\end{equation*}
	where the control input $\{U_t\}_{t\geq 0}$
	also takes values in $\Re^d$, and $\rho_0\in\Pspace(\Re^d)$ is the probability
	distribution of the initial condition $X_0$.  It is noted that the
	randomness here comes only from the random initial condition.

	Suppose the objective functional is of the form ${\sf F}(\pr)=\int \tilde{F}(\pr,x)\pr(x)\ud x$. The Lagrangian $\LX:\Re^+\times \Re^d \times\Pspace(\Re^d) \times \Re^d \to \Re$ is  defined as
	\begin{equation}
	\LX(t,x,\rho,u) :=
	e^{\alpha_t+\gamma_t}\bigg(\underbrace{\frac{1}{2}|e^{-\alpha_t}u|^2}_{\text{kinetic
			energy}} -\underbrace{e^{\beta_t} \tilde{F}(\pr,x)}_{\text{potential energy}}  \bigg)
	\end{equation}
	This formula is a natural generalization of the Lagrangian~\eqref{eq:Lagrangian} and the parameters $\alpha_t,\beta_t,\gamma_t$ are defined exactly the same
	as in the finite-dimensional case. 
	%where $\rho \in \Pspace$ is the probability density function of the state.
	The stochastic optimal control problem is:
	\begin{equation}
	\begin{aligned}
	\text{Minimize}\quad& \JX(u)=\Expect\left[\int_0^\infty \LX(t,X_t,\pr_t,U_t)\ud t\right] \\
	\text{Subject to}\quad& \frac{\ud X_t}{\ud t} = U_t,\quad X_0 \sim \rho_0
	\end{aligned}
	\label{eq:var-problem-X}
	\end{equation}
	where $\pr_t=\text{Law}(X_t)\in \Pspace(\Re^d)$ is the probability density function of the random variable $X_t$. 
	
	%	 and the Lagrangian
	%	\begin{equation}
	%	\LX(t,x,\rho,u) :=  e^{\alpha_t+\gamma_t}\left(\frac{1}{2}|e^{-\alpha_t}u|^2 -e^{\beta_t} (f(x) + \log(\rho(x)))  \right)
	%	\end{equation}
	%	Compared to the deterministic setting~\eqref{eq:Lagrangian}, the Lagrangian has an additional term that depends on  the probability distribution of $X_t$.  
	The Hamiltonian function $\HX:\Re^+\times \Re^d
	\times\Pspace(\Re^d) \times \Re^d \times \Re^d \to \Re$ for
	this problem is given by~\cite[Sec. 6.2.3]{carmona2017probabilistic}:
	%\begin{equation*}
	%\Expect[\HX(t,X_t,\rho_t,Y_t,U_t)]
	%\end{equation*}
	%where
	\begin{equation}\label{eq:Hamilton-function-X}
	\HX(t,x,\rho,y,u):= u \cdot y- {\sf L}(t,x,\rho,u)
	\end{equation}
	where  $y \in \Re^d$ is the dual variable.
	% The Hamilton's equations and the convergence rate upon a Lyapunov function argument is included in Theorem.~\ref{thm:main-res}.
	
%	\begin{remark}
%		The variational problem~\eqref{eq:var-problem-X} is an example
%		of a mean-field (McKean-Vlasov) optimal control problem. 
%		This is because the Lagrangian depends also upon the
%		law of the stochastic
%		process; cf.,~\cite[Ch. 6]{carmona2017probabilistic}. 
%	\end{remark}
%	
	
	%The two formulations are equivalent to eachother if one identifies $\rho_t$ as the probability distribution of $X_t$.
	\subsection{Main result}
	%	The proof of the following Theorem appears in Section~\ref{sec:proof}.
	\begin{theorem}\label{thm:main-res}
		Consider the variational problem
		\eqref{eq:var-problem-X}.  
		\begin{romannum}
			\item The optimal control $U_t^* =
			e^{\alpha_t-\gamma_t}Y_t$ where the optimal trajectory $\{(X_t,Y_t)\}_{t \geq 0}$
			evolves according to the Hamilton's equations:
			\begin{subequations}				\label{eq:Hamilton-X}
				\begin{align}
				\frac{\ud X_t }{\ud t} &= U_t^* =
				e^{\alpha_t-\gamma_t}Y_t,\quad
				X_0 \sim \rho_0\\
				\frac{\ud Y_t }{\ud t} &=  - e^{\alpha_t+\beta_t+\gamma_t}\nabla_\pr {\sf F}(\pr_t)(X_t),\quad Y_0 = \nabla \phi_0(X_0)
				\end{align}
			\end{subequations}
			where $\phi_0$ is any convex function and $\pr_t:=\text{Law}(X_t)$. 
			\vspace{-10pt}
			%				Therefore, the optimal control is~$U_t = e^{\alpha_t-\gamma_t}Y_t$.	
			%			\item The optimal control is $u_t^*
			%			= e^{\alpha_t-\gamma_t} \nabla \phi_t(x)$, where the
			%			optimal trajectory $\{(\rho_t,\phi_t)\}_{t \geq 0}$
			%			evolves according to the Hamilton's pdes:
			%			\begin{subequations}\label{eq:Hamilton-p}
			%				\begin{align}
			%				\frac{\partial \pr_t}{\partial t} &= -\nabla \cdot (\pr_t
			%				\underbrace{e^{\alpha_t-\gamma_t}
			%					\nabla \phi_t}_{u_t^*}
			%				),\quad \text{\it initial
			%					condn.} \;\;\rho_0 \label{eq:acc-flow-p}\\
			%				\frac{\partial \phi_t}{\partial t} &= -e^{\alpha_t-\gamma_t} \frac{|\nabla \phi_t|^2}{2} - e^{\alpha_t + \gamma_t + \beta_t}\frac{\partial F}{\partial \pr}(\pr)\label{eq:acc-flow-phi}
			%				\end{align}
			%			\end{subequations}
			%			%	Therefore, the optimal control is $u_t = e^{\alpha_t-\gamma_t} \nabla \phi_t(x)$.
			%			\item The solutions of the two forms are equivalent in the
			%			following sense:
			%			\begin{equation*}
			%			\text{Law}(X_t) = \rho_t,\quad U_t=u_t(X_t),\quad Y_t= \nabla \phi_t(X_t)
			%			\end{equation*}	
			\item Suppose also that the functional ${\sf F}$ is
			displacement convex and $\pr_\infty$ is its
			minimizer.  Define the energy along the
			optimal trajectory 
			\begin{equation}\label{eq:Lyapunov-function}
			V(t) = \frac{1}{2}\Expect [|X_t + e^{-\gamma_t}Y_t - T_{\pr_t}^{\pr_\infty}(X_t)|^2] + e^{\beta_t}({\sf F}(\pr)-{\sf F}(\pr_\infty))
			\end{equation}
			where the map $T_{\pr_t}^{\pr_\infty}:\Re^d\to \Re^d$ is the optimal transport map from $\pr_t$ to $\pr_\infty$.  Suppose also that the following technical assumption holds: $\Expect[(X_t+e^{-\gamma_t}Y_t -
			T^{\pr_\infty}_{\pr_t}(X_t))\cdot \frac{\ud}{\ud t} T^{\pr_\infty}_{\pr_t}(X_t)]=0$. Then $\frac{\ud V}{\ud t}(t) \leq 0$. Consequently, the following rate of convergence  is obtained along the optimal trajectory:
			\begin{equation}\label{eq:conv-stoch}
			{\sf F}(\pr_t)-{\sf F}(\pr_\infty) \leq O(e^{-\beta_t}),\quad \forall  t\geq 0
			\end{equation}
		\end{romannum}
		%The minimizing $u_t=e^{\alpha_t-\gamma_t} \nabla \phi_t$, where $\phi:[0,\infty)\times \Re^d \to \Re$ satisfies the HJB equation:
		%%The minimizing probability density function satisfies the following system of Hamiltonian equations:
		%\begin{align}
		%%\frac{\partial \pr_t}{\partial t} &= -\nabla(\pr_t e^{\alpha_t-\gamma_t} \nabla \phi_t ) \label{eq:acc-flow-p}\\
		%\frac{\partial \phi_t}{\partial t} &= -e^{\alpha_t-\gamma_t} \frac{|\nabla \phi_t|^2}{2} - e^{\alpha_t + \gamma_t + \beta_t}\log(\frac{\pr_t}{\pr_{\infty}})
		%\label{eq:acc-flow-phi}
		%\end{align}  
		%where $\phi:\Re^d\to \Re$ is a real-valued function dual to $\rho_t$.
		% 	
		%and the following convergence rate holds:
		%\begin{equation}
		%D(\pr_t||\pr_{\infty}) \leq O(e^{-\beta_t})
		%\end{equation}
	\end{theorem}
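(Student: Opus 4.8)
The plan is to mimic the finite-dimensional argument of~\cite{wibisono2016} recalled in \Sec{sec:review}, carrying out the computation directly at the level of the random variables $(X_t,Y_t)$. For part~(i), I would start from the Hamiltonian~\eqref{eq:Hamilton-function-X}. Since $\LX$ depends on $u$ only through the strictly concave (in $u$) term $e^{\alpha_t+\gamma_t}\cdot\frac12|e^{-\alpha_t}u|^2$, the map $v\mapsto \HX(t,x,\rho,y,v) = v\cdot y - e^{\gamma_t-\alpha_t}\frac12|v|^2 + e^{\alpha_t+\gamma_t+\beta_t}\tilde F(\rho,x)$ is strictly concave and its unique maximizer solves $y = e^{\gamma_t-\alpha_t}v$, i.e. $u^* = e^{\alpha_t-\gamma_t}y$. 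Substituting this into the mean-field Pontryagin maximum principle (as in~\cite[Sec.~6.2.3]{carmona2017probabilistic}) gives the state equation $\dot X_t = e^{\alpha_t-\gamma_t}Y_t$ and the adjoint equation $\dot Y_t = -\nabla_x\HX(t,X_t,\pr_t,Y_t,U_t^*) - \Expect\big[\nabla_\rho \HX \text{ term}\big]$; the only subtlety is that the $\rho$-dependence of $\LX$ through $\tilde F(\rho,x)$ contributes a Gâteaux-derivative term that, together with the explicit $\nabla_x\tilde F(\rho,x)$ term, assembles precisely into $e^{\alpha_t+\beta_t+\gamma_t}\nabla_\pr{\sf F}(\pr_t)(X_t)$ — this is exactly the identity relating $\frac{\partial{\sf F}}{\partial\rho}$ and $\tilde F$ to the Wasserstein gradient $\nabla_\rho{\sf F}$, which I would invoke from Appendix~\ref{apdx:derivative}. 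The initialization $Y_0 = \nabla\phi_0(X_0)$ with $\phi_0$ convex is simply the freedom in choosing the initial dual variable; it is needed for part~(ii).

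For part~(ii), I would differentiate $V(t)$ in~\eqref{eq:Lyapunov-function} along the optimal trajectory. Write $V(t) = \frac12\Expect[|A_t|^2] + e^{\beta_t}({\sf F}(\pr_t)-{\sf F}(\pr_\infty))$ with $A_t := X_t + e^{-\gamma_t}Y_t - T^{\pr_\infty}_{\pr_t}(X_t)$. Differentiating,
\begin{equation*}
\frac{\ud V}{\ud t} = \Expect\Big[A_t\cdot\big(\dot X_t - \dot\gamma_t e^{-\gamma_t}Y_t + e^{-\gamma_t}\dot Y_t - \tfrac{\ud}{\ud t}T^{\pr_\infty}_{\pr_t}(X_t)\big)\Big] + \dot\beta_t e^{\beta_t}({\sf F}(\pr_t)-{\sf F}(\pr_\infty)) + e^{\beta_t}\frac{\ud}{\ud t}{\sf F}(\pr_t).
\end{equation*}
Now I substitute the Hamilton's equations: $\dot X_t = e^{\alpha_t-\gamma_t}Y_t$ and $e^{-\gamma_t}\dot Y_t = -e^{\alpha_t+\beta_t}\nabla_\pr{\sf F}(\pr_t)(X_t)$. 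Using the scaling relations $\alpha_t = \log p - \log t$, $\beta_t = p\log t+\log C$, $\gamma_t = p\log t$ (so $e^{\alpha_t} = \dot\gamma_t$ and $\dot\beta_t = e^{\alpha_t+\beta_t}$, exactly the identities that make the deterministic proof work), the terms $\Expect[A_t\cdot\dot X_t]$ and $-\dot\gamma_t e^{-\gamma_t}\Expect[A_t\cdot Y_t]$ combine and cancel, leaving
\begin{equation*}
\frac{\ud V}{\ud t} = -e^{\alpha_t+\beta_t}\Expect\big[A_t\cdot\nabla_\pr{\sf F}(\pr_t)(X_t)\big] - \Expect\big[A_t\cdot\tfrac{\ud}{\ud t}T^{\pr_\infty}_{\pr_t}(X_t)\big] + e^{\alpha_t+\beta_t}\big({\sf F}(\pr_t)-{\sf F}(\pr_\infty)\big) + e^{\beta_t}\tfrac{\ud}{\ud t}{\sf F}(\pr_t).
\end{equation*}
The second term vanishes by the stated technical assumption. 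For the last term I use the continuity equation $\partial_t\pr_t = -\nabla\cdot(\pr_t U_t^*)$ and the definition of the Wasserstein gradient to write $\frac{\ud}{\ud t}{\sf F}(\pr_t) = \Expect[\nabla_\pr{\sf F}(\pr_t)(X_t)\cdot U_t^*] = e^{\alpha_t-\gamma_t}\Expect[\nabla_\pr{\sf F}(\pr_t)(X_t)\cdot Y_t]$; substituting $e^{-\gamma_t}Y_t = A_t - (X_t - T^{\pr_\infty}_{\pr_t}(X_t))$ converts this into $e^{\alpha_t+\beta_t}\Expect[\nabla_\pr{\sf F}(\pr_t)(X_t)\cdot A_t] - e^{\alpha_t+\beta_t}\Expect[\nabla_\pr{\sf F}(\pr_t)(X_t)\cdot(X_t - T^{\pr_\infty}_{\pr_t}(X_t))]$. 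The first piece cancels the first term above, and we are left with
\begin{equation*}
\frac{\ud V}{\ud t} = e^{\alpha_t+\beta_t}\Big({\sf F}(\pr_t)-{\sf F}(\pr_\infty) - \Expect\big[\nabla_\pr{\sf F}(\pr_t)(X_t)\cdot\big(X_t - T^{\pr_\infty}_{\pr_t}(X_t)\big)\big]\Big).
\end{equation*}

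The remaining step, and the one I expect to be the crux, is to show this bracket is $\le 0$, i.e.
\begin{equation*}
{\sf F}(\pr_\infty) \ge {\sf F}(\pr_t) + \Expect\big[\nabla_\pr{\sf F}(\pr_t)(X_t)\cdot\big(T^{\pr_\infty}_{\pr_t}(X_t) - X_t\big)\big].
\end{equation*}
This is exactly the first-order characterization of displacement convexity~\cite{mccann1997convexity}: along the displacement geodesic $t\mapsto \pr_s^{t\to\infty}$ from $\pr_t$ to $\pr_\infty$ (pushforward of $\pr_t$ under $x\mapsto (1-s)x + s T^{\pr_\infty}_{\pr_t}(x)$), displacement convexity of ${\sf F}$ gives ${\sf F}(\pr_\infty) \ge {\sf F}(\pr_t) + \frac{\ud}{\ud s}\big|_{s=0^+}{\sf F}(\pr_s^{t\to\infty})$, and the derivative on the right equals precisely $\Expect[\nabla_\pr{\sf F}(\pr_t)(X_t)\cdot (T^{\pr_\infty}_{\pr_t}(X_t) - X_t)]$ by the definition of the Wasserstein gradient as the velocity-pairing functional. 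Hence $\frac{\ud V}{\ud t}\le 0$, so $V(t)\le V(0)$ for all $t$; since the quadratic term in $V$ is nonnegative, $e^{\beta_t}({\sf F}(\pr_t)-{\sf F}(\pr_\infty)) \le V(0)$, which gives ${\sf F}(\pr_t)-{\sf F}(\pr_\infty)\le V(0)e^{-\beta_t} = O(e^{-\beta_t})$, i.e.~\eqref{eq:conv-stoch}. The main obstacles are thus (a) making the Wasserstein-gradient identities rigorous — in particular that $\frac{\ud}{\ud t}{\sf F}(\pr_t) = \Expect[\nabla_\pr{\sf F}(\pr_t)(X_t)\cdot U_t^*]$ and the first-variation-along-geodesic formula — which requires enough regularity of $\pr_t$ and of $T^{\pr_\infty}_{\pr_t}$; and (b) the technical assumption $\Expect[A_t\cdot\frac{\ud}{\ud t}T^{\pr_\infty}_{\pr_t}(X_t)] = 0$, which is taken as a hypothesis here and is the one genuinely non-elementary input (it holds automatically in the Gaussian case and is discussed separately).
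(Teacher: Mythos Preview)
Your proposal is correct and follows essentially the same approach as the paper's own proof: apply the mean-field Pontryagin maximum principle (from \cite[Sec.~6.2.3]{carmona2017probabilistic}) for part~(i), then differentiate $V(t)$ term by term using the scaling identities $\dot\gamma_t=e^{\alpha_t}$, $\dot\beta_t=e^{\alpha_t}$ and the Wasserstein chain rule for $\frac{\ud}{\ud t}{\sf F}(\pr_t)$, invoke the technical assumption to drop the $\frac{\ud}{\ud t}T^{\pr_\infty}_{\pr_t}$ term, and close with the variational-inequality characterization of displacement convexity to conclude $\frac{\ud V}{\ud t}\le 0$. One small slip: in your parenthetical you wrote $\dot\beta_t=e^{\alpha_t+\beta_t}$, but the correct identity (which your subsequent algebra in fact uses) is $\dot\beta_t=e^{\alpha_t}$.
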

	\begin{proof}[Proof sketch]
		The Hamilton's equations are derived using the standard mean-field optimal control theory~\cite{carmona2017probabilistic}. The Lyapunov function argument is based upon the variational inequality characterization of a displacement convex function~\cite[Eq. 10.1.7]{ambrosio2008gradient}.
		The detailed proof appears in the Appendix. We expect that the technical assumption is not necessary. This is the subject of the continuing work. 
	\end{proof}	
	
	%For the stochastic setting, the objective is to

	%\begin{remark}
	%	The proof for the convergence rate follows from a Lyapunov function argument. Under the assumption that $f$ is convex, $D(\cdot||\rho_\infty):{\cal P}_2(\Re^d)\to \Re$ is (geodesically) convex in the space of probability distributions equipped with Wasserstein metric. 
	%\end{remark} 
	
	%\begin{remark}
	%	The two formulations are related through the identity $Y_t=\nabla \phi_t(X_t)$.	
	%\end{remark}
	\subsection{Relative entropy as the functional}\label{sec:relative-entropy}
	
	In the remainder of this paper, we assume that the functional
	${\sf F}(\pr)={\sf D}(\pr|\pr_\infty)$ is the relative entropy where
	$\pr_\infty \in \Pspace(\Re^d)$ is a given target probability distribution.
	In this case the Hamilton's equations are given by
	\begin{subequations}		\label{eq:Hamilton-relative-entropy}
		\begin{align}
		\frac{\ud X_t}{\ud t}  & =
		e^{\alpha_t-\gamma_t}Y_t,\quad
		X_0 \sim \rho_0\\
		\frac{\ud  Y_t}{\ud t} &=  - e^{\alpha_t+\beta_t+\gamma_t}(\nabla f(X_t)  + \nabla \log(\pr_t(X_t)),\quad Y_0=\nabla \phi_0(X_0)
		\end{align}
	\end{subequations}
	where $\pr_t=\text{Law}(X_t)$ and $f=-\log(\pr_\infty)$. Moreover, if $f$ is
	convex (or equivalently $\pr_\infty$ is log-concave), then $\FP$ is
	displacement convex with the unique minimizer at $\pr_\infty$ and the
	convergence estimate is given by ${\sf D}(\pr_t\vert\pr_\infty)\leq O(e^{-\beta_t})$. 
	\begin{remark}\label{rem:relative-entropy}
		The Hamilton's equations~\eqref{eq:Hamilton-relative-entropy}
		with the relative entropy functional 
		is related to the under-damped Langevin
		equation~\cite{cheng2017underdamped}. The difference is that
		the deterministic term $\nabla \log(\pr_t)$
		in~\eqref{eq:Hamilton-relative-entropy} is replaced with a
		random 
		Brownian motion term in the under-damped Langevin
		equation. More detailed comparison appears in the Appendix~\ref{apdx:Langevin}.
	\end{remark}
	\subsection{Quadratic Gaussian case}\label{sec:Gaussian}
	Suppose the initial distribution $\rho_0$ and the target distribution
	$\rho_\infty$ are both Gaussian, denoted as ${\cal N}(m_0,\Sigma_0)$
	and ${\cal N}(\bar{x},Q)$, respectively. 
	This is equivalent to the  objective function $f(x)$ being quadratic
	of the form $f(x) = \frac{1}{2}(x-\bar{x})^\top Q^{-1}(x-\bar{x})$. 
	Therefore, this problem is referred to as the {\it quadratic Gaussian
		case}. The following Proposition shows that the mean of the
	stochastic process $(X_t,Y_t)$ evolves according to the
	Nesterov ODE~\eqref{eq:acc-grad-flow}:
	\begin{proposition}(Quadratic Gaussian case) \label{prop:Gaussian}
		Consider the variational problem~\eqref{eq:var-problem-X} for
		the quadratic Gaussian case.  % where
		% the initial density $\rho_0$ is Gaussian and the objective
		% function $f(x) =
		% \frac{1}{2}(x-\bar{x})Q^{-1}(x-\bar{x})$.
		Then 
		\begin{romannum}
			\item The stochastic process $(X_t,Y_t)$ is a Gaussian process. The Hamilton's equations
			are given by:
			\begin{align*}
			\frac{\ud X_t}{\ud t}  &= e^{\alpha_t-\gamma_t}Y_t,\quad
			\frac{\ud Y_t}{\ud t}  =  - e^{\alpha_t+\beta_t+\gamma_t} (Q^{-1}(X_t-\bar{x}) -\Sigma_t^{-1}(X_t-m_t))
			\end{align*}    
			where $m_t$ and $\Sigma_t$ are the mean and
			the covariance of $X_t$. 
			%			The means evolve according to: 
			\item 
			Upon taking the expectation of both sides, and denoting $n_t:=\Expect[Y_t]$
			\begin{align*}
			\frac{\ud m_t }{\ud t} &= e^{\alpha_t-\gamma_t}n_t,\quad
			\frac{\ud  n_t}{\ud t} =  - e^{\alpha_t+\beta_t+\gamma_t} \underbrace{Q^{-1}(m_t-\bar{x})}_{\nabla f(m_t)}
			\end{align*}
			which is identical to Nesterov ODE~\eqref{eq:acc-grad-flow}.
		\end{romannum}
	\end{proposition}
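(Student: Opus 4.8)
The plan is to specialize the Hamilton's equations~\eqref{eq:Hamilton-relative-entropy} to the quadratic potential $f(x)=\tfrac12(x-\bar x)^\top Q^{-1}(x-\bar x)$ and to verify, by a self-consistency argument on the law $\pr_t=\text{Law}(X_t)$, that they admit a Gaussian solution; the moment dynamics are then read off directly. First I would make the ansatz $\pr_t={\cal N}(m_t,\Sigma_t)$ for some curve $t\mapsto(m_t,\Sigma_t)$ with $\Sigma_t\succ 0$, so that $\nabla\log\pr_t(x)=-\Sigma_t^{-1}(x-m_t)$ while $\nabla f(x)=Q^{-1}(x-\bar x)$. Under this ansatz~\eqref{eq:Hamilton-relative-entropy} reduces to the system asserted in part~(i), which is \emph{affine-linear} in $(X_t,Y_t)$:
\begin{align*}
\frac{\ud}{\ud t}\begin{pmatrix}X_t\\ Y_t\end{pmatrix}&=A_t\begin{pmatrix}X_t\\ Y_t\end{pmatrix}+b_t,\\
A_t&=\begin{pmatrix}0 & e^{\alpha_t-\gamma_t}I\\ -e^{\alpha_t+\beta_t+\gamma_t}(Q^{-1}-\Sigma_t^{-1}) & 0\end{pmatrix},\qquad b_t=\begin{pmatrix}0\\ e^{\alpha_t+\beta_t+\gamma_t}(Q^{-1}\bar x-\Sigma_t^{-1}m_t)\end{pmatrix}.
\end{align*}
Since $\phi_0$ may be taken quadratic --- the natural choice in the Gaussian setting, e.g.\ $\phi_0=\tfrac12|\cdot|^2$, giving $Y_0=X_0$ --- the initial pair $(X_0,Y_0)$ is jointly Gaussian.

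Next I would close the equations on the moments. Writing $m_t=\Expect[X_t]$, $n_t=\Expect[Y_t]$ and taking expectations of the displayed system, the term $\Expect[\Sigma_t^{-1}(X_t-m_t)]$ vanishes identically, leaving $\dot m_t=e^{\alpha_t-\gamma_t}n_t$ and $\dot n_t=-e^{\alpha_t+\beta_t+\gamma_t}Q^{-1}(m_t-\bar x)=-e^{\alpha_t+\beta_t+\gamma_t}\nabla f(m_t)$ --- precisely the Nesterov ODE~\eqref{eq:acc-grad-flow}, which is part~(ii). Taking covariances gives a closed matrix ODE of Riccati type, $\dot P_t=A_tP_t+P_tA_t^\top$ for $P_t:=\mathrm{Cov}\big((X_t,Y_t)\big)$, whose $(1,1)$-block is $\Sigma_t$. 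Together $(m_t,n_t,P_t)$ solve a finite-dimensional ODE whose vector field is locally Lipschitz wherever $\Sigma\succ 0$; since $\Sigma_0\succ 0$, Picard--Lindel\"of yields a unique solution on a maximal interval along which $\Sigma_t$ stays positive definite.

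Finally I would close the loop to confirm the ansatz. Substituting the computed curve $(m_t,\Sigma_t)$ into $A_t,b_t$ turns the equations for $(X_t,Y_t)$ into an ordinary (non-mean-field) linear ODE with time-dependent coefficients; its solution from the jointly Gaussian datum $(X_0,Y_0)$ is an affine image of a Gaussian, hence a Gaussian process, and by construction --- and uniqueness of the moment ODE --- its first two moments are $(m_t,n_t)$ and $P_t$. Therefore $\text{Law}(X_t)={\cal N}(m_t,\Sigma_t)=\pr_t$, the ansatz is consistent, and this $(X_t,Y_t)$ solves~\eqref{eq:Hamilton-relative-entropy}; by uniqueness it is the optimal trajectory of Theorem~\ref{thm:main-res}, establishing~(i). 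The one genuinely non-routine point --- and where I expect the real work to lie --- is the non-degeneracy of $\Sigma_t$, equivalently global well-posedness of the Riccati block, which is needed for $\nabla\log\pr_t$ to be defined along the whole trajectory; a comparison argument for the covariance ODE, or the observation that $X_0\mapsto X_t$ stays a diffeomorphism (so $\pr_t$ retains full support), should handle it. Everything else is linear-Gaussian bookkeeping.
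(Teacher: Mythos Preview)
The paper states this proposition without proof, treating it as an immediate specialization of the Hamilton's equations~\eqref{eq:Hamilton-relative-entropy}: once one substitutes $\nabla f(x)=Q^{-1}(x-\bar x)$ and, under the Gaussian ansatz, $\nabla\log\rho_t(x)=-\Sigma_t^{-1}(x-m_t)$, the form in~(i) drops out, and~(ii) follows by taking expectations since $\Expect[\Sigma_t^{-1}(X_t-m_t)]=0$. Your argument is correct and is precisely this computation, carried out with more care: you additionally close the loop by verifying that the Gaussian ansatz is self-consistent (the affine-linear dynamics with jointly Gaussian initial data preserve Gaussianity), and you flag the genuine technical point that $\Sigma_t\succ 0$ is needed along the trajectory for $\nabla\log\rho_t$ to be defined --- a point the paper does not address. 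So your proposal is in line with, and in fact goes somewhat beyond, the paper's own treatment.
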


	\section{Numerical algorithm} \label{sec:numerical}
	The proposed numerical algorithm is based upon an interacting particle
	implementation of the Hamilton's
	equation~\eqref{eq:Hamilton-relative-entropy}.  
	Consider a system of $N$ particles $\{(X^i_t,Y^i_t)\}_{i=1}^N$
	that evolve according to:
	\begin{subequations}
		\begin{align*}
		\frac{\ud X^i_t}{\ud t}  &= e^{\alpha_t-\gamma_t}Y^i_t ,\quad X^i_0 \sim \rho_0\\
		\frac{\ud Y^i_t }{\ud t} &=  - e^{\alpha_t+\beta_t+\gamma_t} (\nabla f(X^i_t) + \underbrace{I^{(N)}_t(X^i_t) }_{\text{interaction term}}),\quad Y^i_0 =\nabla \phi_0(X^i_0)
		\end{align*}
		\label{eq:Hamilton-flow-p-i}
	\end{subequations} 
	The interaction term $I^{(N)}_t$ is an empirical approximation of the $\nabla
	\log(\rho_t) $ term in~\eqref{eq:Hamilton-relative-entropy}.  
	We propose two types of empirical approximations as follows:
	%	\begin{enumerate}
	%		\item 
	
	\textbf{1. Gaussian approximation:}  Suppose the
	density is approximated as a
	Gaussian~$\mathcal{N}(m_t,\Sigma_t)$. In this case, $\nabla \log(\rho_t(x)) = - {\Sigma_t}^{-1} (x-m_t) $.
	%		\begin{align*}
	%%		-\log(\rho_t(x))&= \frac{1}{2}(x-m_t)^\top {\Sigma_t}^{-1} (x-m_t) + c\\
	%		-\nabla \log(\rho_t(x)) &=  {\Sigma_t}^{-1} (x-m_t) 
	%		\end{align*} 
	This motivates the following empirical approximation of
	the interaction term:
	\begin{align}\label{eq:interaction-Gaussian}
	I^{(N)}_t(x) =- {\Sigma_t^{(N)}}^{-1} (x-m^{(N)}_t) 
	\end{align} 
	where $m_t^{(N)}:=N^{-1} \sum_{i=1}^N X^i_t$ is
	the empirical mean and
	$\Sigma_t^{(N)}:=\frac{1}{N-1}\sum_{i=1}^N
	(X^i_t-m^{(N)}_t)(X^i_t-m^{(N)}_t)^\top$ is the
	empirical covariance. 
	
	Even though the approximation is asymptotically (as $N\rightarrow\infty$) exact only under the Gaussian assumption, it may
	be used in a more general settings, particularly
	when the density $\rho_t$ is unimodal. The situation is analogous to
	the (Bayesian) filtering problem, where an ensemble Kalman filter is
	used as an approximate solution for non-Gaussian distributions~\cite{evensen2003ensemble}.
	
	\textbf{2. Diffusion map approximation:} This
	is based upon
	the diffusion map approximation of the weighted
	Laplacian
	operator~\cite{coifman2006diffusion,hein2007graph}.
	For a $C^2$ function $f$, the weighted Laplacian is defined as $\Delta_\pr f :=                  \frac{1}{\pr}\nabla \cdot(\pr \nabla f)$.  Denote $e(x)=x$ as the
	coordinate function on $\Re^d$.  It is a
	straightforward calculation to show that $\nabla
	\log(\pr) = \Delta_\pr e$.  This allows one to use
	the diffusion map approximation of the weighted Laplacian
	to approximate the interaction term as follows:  
	\begin{equation}\label{eq:interaction-kernel}
	\text{(DM)}\quad I_t^{(N)}(X^i_t)=\frac{1}{\epsilon}\frac{\sum_{j=1}^N \keps(X^i_t,X^j_t)(X^j_t-X^i_t)}{\sum_{j=1}^N \keps(X^i_t,X^j_t)}
	\end{equation}
	where the kernel 
	%\[
	$\keps(x,y) = \frac{\geps(x,y)}{\sqrt{\sum_{i=1}^N \geps(y,X^i)}}$
	%\]
	is constructed empirically in terms of the Gaussian
	kernel $\geps(x,y)=\exp(-|x-y|^2/(4\epsilon))$.  The
	 parameter $\epsilon$ is referred to as the kernel bandwidth. 
	The approximation is asymptotically exact as $\epsilon \downarrow 0$ and $N \uparrow \infty$.
	The approximation error is of order $O(\epsilon) + O(\frac{1}{\sqrt{N}\epsilon^{d/4}})$ where the first term is referred to as the bias error and the second term is referred to as the variance error~\cite{hein2007graph}. The variance error is the dominant term in the error for small values of $\epsilon$, whereas the bias error is the dominant term for large values of $\epsilon$ (see Figure~\ref{fig:comparison-eps}).
	%	\end{enumerate}
	
	The resulting interacting particle algorithm is tabulated in
	Table~\ref{alg:Gaussian-kernel}.  The symplectic method proposed
	in~\cite{betancourt2018symplectic} is used to carry out the numerical
	integration.  The algorithm is applied to two examples as described in
	the following sections.

	\begin{algorithm}[t]
		\caption{Interacting particle implementation of the accelerated gradient flow}
		\begin{algorithmic}
			\REQUIRE $\pr_0$, $\phi_0$, $N$, $t_0$, $\Delta t$, $p$, $C$, $K$
			\ENSURE $\{X^i_k\}_{i=1,k=0}^{N,K}$
			\STATE Initialize $\{X^i_0\}_{i=1}^N \overset{\text{i.i.d}}{\sim} \pr_0$, $Y^i_0=\nabla \phi_0(X^i_0)$  
			\STATE Compute $I^{(N)}_0(X^i_0)$ with \eqref{eq:interaction-Gaussian} or~\eqref{eq:interaction-kernel}
			\FOR   {$k=0$ to  $K-1$}
			\STATE $t_{k+\frac{1}{2}}=t_k + \frac{1}{2}\Delta t$
			\STATE $Y^i_{k+\frac{1}{2}} =Y^i_{k} - \frac{1}{2}Cpt_{k+\frac{1}{2}}^{2p-1}(\nabla f(X^i_k) + I^{(N)}_k(X^i_k))\Delta t$ 
			\STATE $X^i_{k+1} = X^i_{k} + \frac{p}{t_{k+\frac{1}{2}}^{p+1}}Y^i_k\Delta t$ 
			\STATE Compute $I^{(N)}_{k+1}(X^i_{k+1})$ with~\eqref{eq:interaction-Gaussian}or~\eqref{eq:interaction-kernel}
			\STATE $Y^i_{k+1} =Y^i_{k+\frac{1}{2}} - \frac{1}{2}Cpt_{k+\frac{1}{2}}^{2p-1}(\nabla f(X^i_{k+1}) + I^{(N)}_{k+1}(X^i_{k+1}))\Delta t$ 
			\STATE $t_{k+1}=t_{k+\frac{1}{2}} + \frac{1}{2}\Delta t$	
			\ENDFOR		
		\end{algorithmic}
		\label{alg:Gaussian-kernel}
	\end{algorithm}
	%	\begin{algorithm}[t]
	%	\caption{Interacting particle implementation of the accelerated transport with kernel approximation}
	%	\begin{algorithmic}
	%		%		\REQUIRE $\{X^i\}_{i=1}^N$, $H:=\{h(X^i)\}_{i=1}^N$,$\Phi_0:=\{\phi_0(X^i)\}_{i=1}^N$ 
	%		%		\ENSURE $\Phi:=\{\phi(X^i)\}_{i=1}^N$, $\{\nabla \phi(X^i)\}_{i=1}^N$ \medskip
	%		\STATE Initialize $\{X^i_0,Y^i_0\}_{i=1}^N $, $\Delta t$. $N$, $p$, $C$, $\epsilon$  
	%		\FOR   {$t=1$ to  $T$}
	%		%\STATE $m_t^{(N)} = \frac{1}{N}\sum_{i=1}^N X^i_t$,\quad $\Sigma_t^{(N)}=\frac{1}{N-1}\sum_{i=1}^N (X^i_t-m^{(N)}_t)(X^i_t-m^{(N)}_t)^\top$ 
	%		%\STATE $I^i_t = {\Sigma_t^{(N)}}^{-1} (X^i_t-m^{(N)}_t) $ 
	%		\STATE $Y^i_{t+\frac{1}{2}} =Y^i_{t} - \frac{1}{2}Cpt^{2p-1}(\nabla f(X^i_t) + I^i_t)\Delta t$ \medskip
	%		\STATE $X^i_{t+1} = X^i_{t} + \frac{p}{t^{p+1}}Y^i_t\Delta t$ \medskip
	%%		\STATE $m_{t+1}^{(N)} = \frac{1}{N}\sum_{i=1}^N X^i_{t+1}$,\quad $\Sigma_{t+1}^{(N)}=\frac{1}{N-1}\sum_{i=1}^N (X^i_{t+1}-m^{(N)}_{t+1})(X^i_{t+1}-m^{(N)}_{t+1})^\top$ 
	%		\STATE $g_{ij} = \exp(-|X_i-X_j|^2/(4\epsilon))$ \quad for \quad $i,j=1,\ldots,N$
	%		\STATE $k_{ij} = \frac{g_{ij}}{\sqrt{\sum_{l=1}^N g_{il}}\sqrt{\sum_{l=1}^N g_{jl}}}$ \quad for \quad $i,j=1,\ldots,N$
	%		\STATE $I^i_{t+1} = \frac{1}{\epsilon}\frac{\sum_{j=1}^N k_{ij}(X^j-X^i)}{\sum_{j=1}^N k_{ij}} $ 
	%		\STATE $Y^i_{t+1} =Y^i_{t+\frac{1}{2}} - \frac{1}{2}Cpt^{2p-1}(\nabla f(X^i_{t+1}) + I^i_{t+1})\Delta t$ \medskip		
	%		\ENDFOR		
	%	\end{algorithmic}
	%	\label{alg:kernel}
	%\end{algorithm}
	
	\begin{remark}
		For the case where there is only one particle (
		$N=1$), the interaction term is zero and the
		system~\eqref{eq:Hamilton-flow-p-i} reduces to the
		Nesterov ODE~\eqref{eq:acc-grad-flow}.  
	\end{remark}
	
	\begin{remark}(Comparison with density estimation)
		The diffusion map approximation algorithm is conceptually different from an explicit density estimation-based approach. A basic density estimation is to approximate $\pr(x) \approx \frac{1}{N}\sum_{i=1}^N \geps(x,X^i_t)$ where $\geps(x,y)$ is the Gaussian kernel. Using such an approximation, the interaction term is approximated as 
		\begin{equation}\label{eq:density-estimation}
		\text{(DE)}\quad I^{(N)}_t(X^i_t) = \frac{1}{2\epsilon}\frac{\sum_{j=1}^N \geps(X^i_t,X^j_t)(X^j_t-X^i_t)}{\sum_{j=1}^N \geps(X^i_t,X^j_t)}
		\end{equation} 
		Despite the apparent similarity of the two formulae,~\eqref{eq:interaction-kernel} for diffusion map approximation and~\eqref{eq:density-estimation} for density estimation, the nature of the two approximations is different. The difference arises because the kernel $\keps(x,y)$ in~\eqref{eq:interaction-kernel} is data-dependent whereas the kernel in~\eqref{eq:density-estimation} is not.
		While both approximations are exact in the asymptotic limit as $N\uparrow\infty$ and $\epsilon \downarrow 0$, they exhibit different convergence rates. Numerical experiments presented in  Figure~\ref{fig:comparison-N}-(d) show that the diffusion map approximation has a much smaller variance for intermediate values of $N$. Theoretical understanding of the difference is the subject of continuing  work.  
	\end{remark}
	%This approximation is challenging.  The system~\ref{eq:Hamilton-flow-p-i} is the interacting particle implementation of the system~\eqref{eq:Hamilton-X}.  
	
	\subsection{Gaussian Example}\label{sec:num-Gaussian}
	Consider the Gaussian example as described in Sec.~\ref{sec:Gaussian}. 
	%	The interacting particle implementation is 
	%%	\begin{subequations*}
	%		\begin{align*}
	%		\ud X^i_t &= e^{\alpha_t-\gamma_t}Y^i_t \ud t,\quad X^i_0 \sim \rho_0\\
	%		\ud Y^i_t &=  - e^{\alpha_t+\beta_t+\gamma_t} (\nabla f(X^i_t) + {\Sigma_t^{(N)}}^{-1}(X^i_t-m^{(N)}_t) )\ud t,\quad Y^i_0 \sim q_0
	%		\end{align*}
	%%		\label{eq:Hamilton-flow-p-i}
	%%	\end{subequations*}
	%where $m_t^{(N)}=\frac{1}{N} \sum_{i=1}^N X^i_t$ is the empirical mean, and $\Sigma_t^{(N)}=\frac{1}{N-1}\sum_{i=1}^N (X^i_t-m_t)(X^i_t-m_t)^\top$ is the empirical covariance. In the limit as $N\to \infty$ the empirical distribution of $\{(X^i_t,Y^i_t)\}_{i=1}^N$ would converge to the distribution of $(X_t,Y_t)$. 
	The simulation results for the scalar ($d=1$) case
	with initial distribution $\pr_0=\mathcal{N}(2,4)$ and
	target distribution $\mathcal{N}(\bar{x},Q)$ where
	$\bar{x}=-5.0$ and $Q=0.25$ is depicted in
	Figure~\ref{fig:Gaussian-res}-(a)-(b).  For this
	simulation, the numerical
	parameters are as follows: $N=100$,
	$\phi_0(x)=0.5(x-2)$, $t_0=1$, $\Delta t= 0.1$, $p=2$,$C=0.625$, and $K=400$. The result numerically  verifies the $O(e^{-\beta_t})=O(\frac{1}{t^2})$ convergence rate derived in Theorem~\ref{thm:main-res}  for the case where the target distribution is Gaussian.

	\begin{figure}[t]
		\begin{tabular}{cc}
			\subfigure[]{
				\includegraphics[width=0.5\columnwidth]{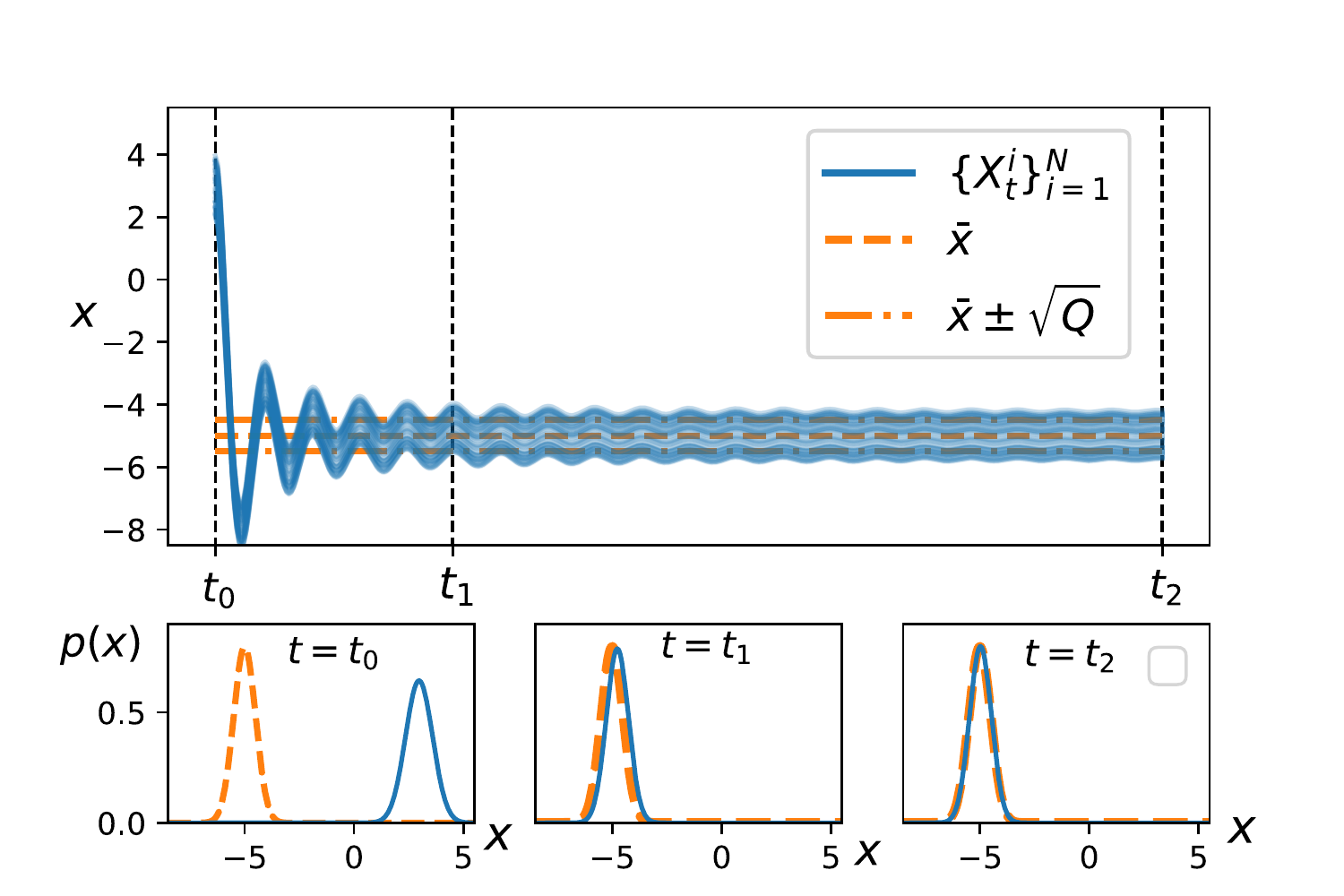}
				
			}&
			\subfigure[]{
				\includegraphics[width=0.5\columnwidth]{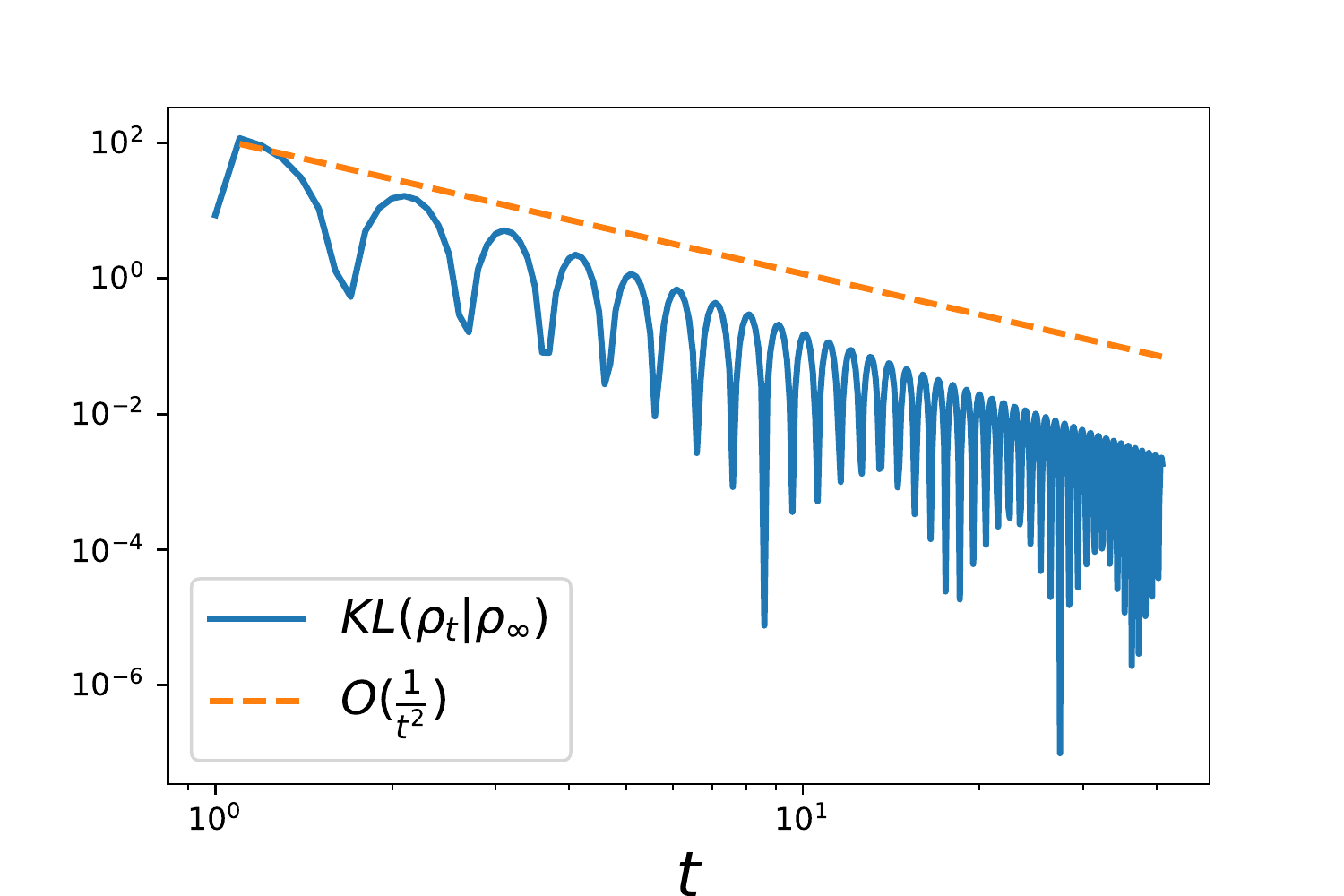}
				
			}	
		\end{tabular}
		\caption{Simulation result for the Gaussian case
			(Example~\ref{sec:num-Gaussian}): (a) The time
			traces of the particles; (b) The KL-divergence as a
			function of time.   
			}
		\label{fig:Gaussian-res}
	\end{figure}
	\begin{figure}[t]
	\begin{tabular}{cc}
		\subfigure[]{
			\includegraphics[width=0.5\columnwidth]{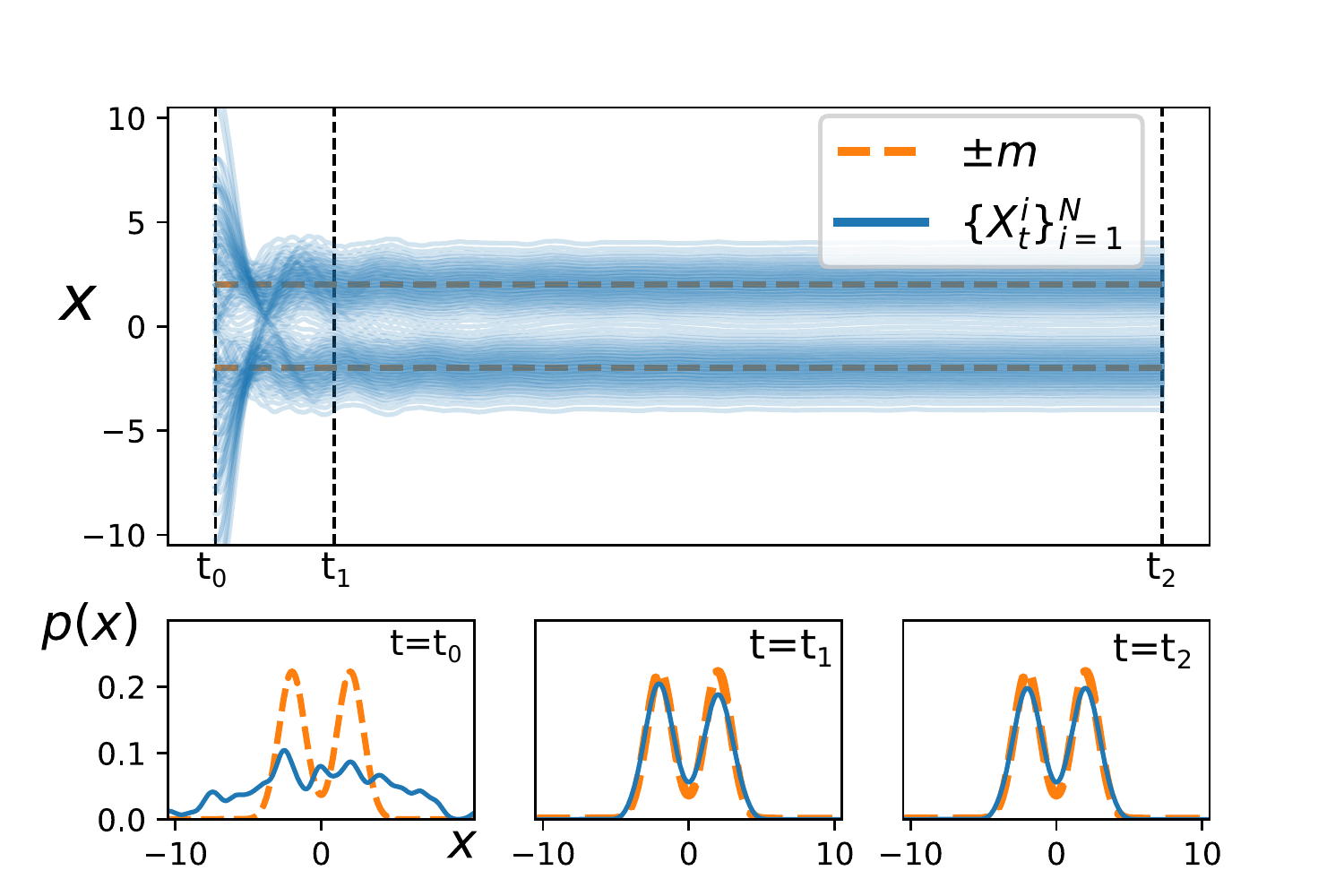}
			
		}&
		\subfigure[]{
			\includegraphics[width=0.5\columnwidth]{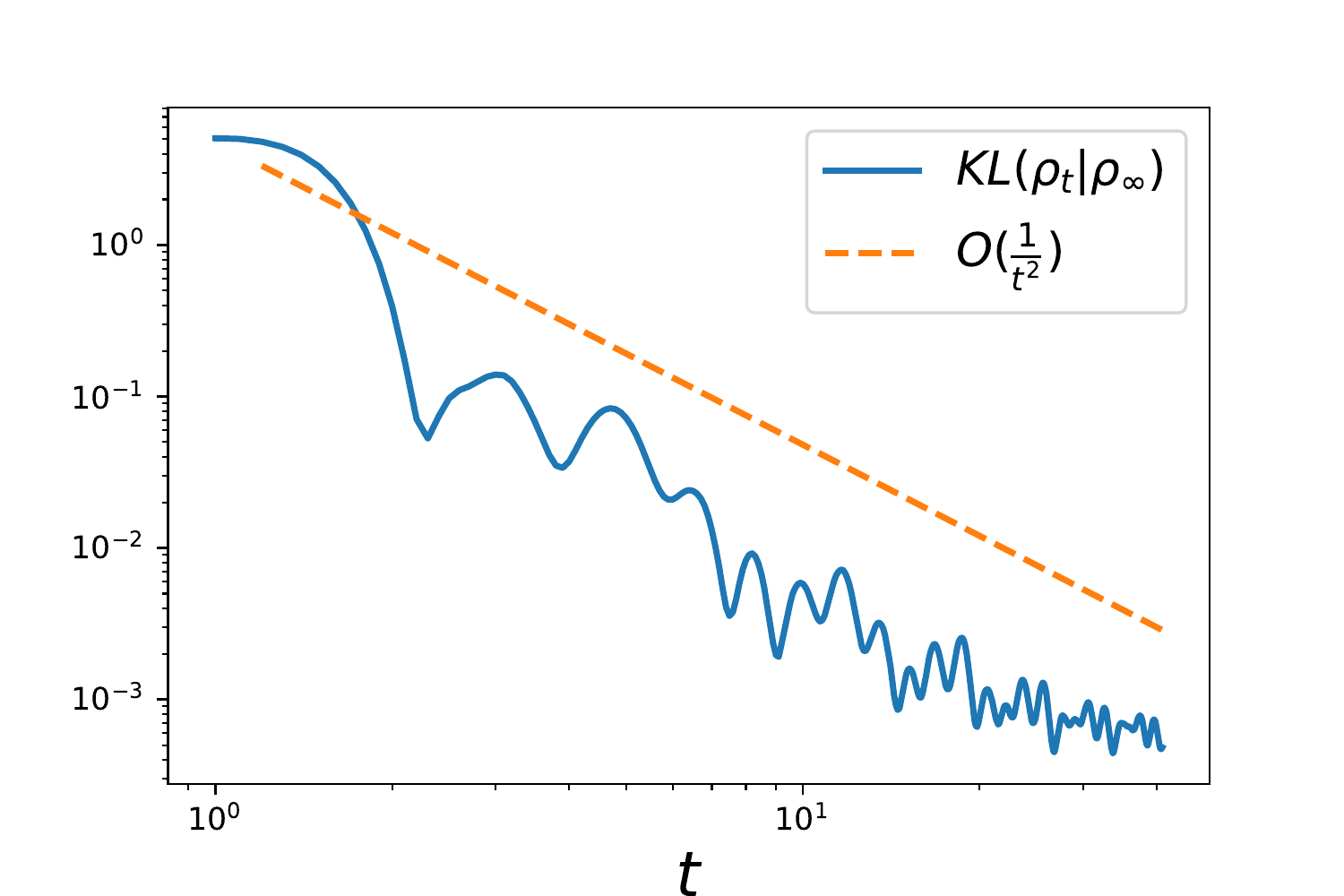}
			
		}		
	\end{tabular}
	\caption{  
		Simulation result for the non-Gaussian case (Example~\ref{sec:num-non-Gaussian}): (a) The time
		traces of the particles; (b) The KL-divergence as a
		function of time.}
	\label{fig:non-Gaussian-res}
\end{figure}

	\subsection{Non-Gaussian example}\label{sec:num-non-Gaussian}
	This example involves a non-Gaussian target distribution $\pr_\infty =
	\frac{1}{2} \mathcal{N}(-m,\sigma^2) +
	\frac{1}{2}\mathcal{N}(m,\sigma^2)$ which is a mixture of two
	one-dimensional Gaussians with $m=2.0$ and $\sigma^2=0.8$.  The
	simulation results are depicted in
	Figure~\ref{fig:non-Gaussian-res}-(a)-(b).  The numerical parameters are
	same as in the Example~\ref{sec:num-Gaussian}.  The interaction term is
	approximated using the diffusion map approximation with $\epsilon=0.01$.  The numerical result depicted in Figure~\ref{fig:non-Gaussian-res}-(a) show that the diffusion map algorithm converges to the mixture of Gaussian target distribution.
	The result depicted in Figure~\ref{fig:non-Gaussian-res}-(b) suggests that the convergence rate $O(e^{-\beta_t})$ also appears to hold for this non-log-concave target distribution. Theoretical justification of this is subject of continuing work.

	% The simulation is done according to algorithm~\ref{alg:Gaussian-kernel} using kernel-approximation of the interaction term with $\epsilon=0.01$, and other parameters similar to Example~\ref{sec:num-Gaussian}.   

	\subsection{Comparison with MCMC and HMCMC}
	This section contains numerical experiment comparing the performance of the accelerated algorithm~\ref{alg:Gaussian-kernel} using the diffusion map (DM) approximation~\eqref{eq:interaction-kernel} and the density estimation~(DE)-based approximation ~\eqref{eq:density-estimation} with the Markov chain Monte-Carlo (MCMC) algorithm studied in~\cite{durmus2016high}  and the Hamiltonian MCMC algorithm studied in~\cite{cheng2017underdamped}.  
	
	We consider the problem setting of the mixture of Gaussians as in example~\ref{sec:num-non-Gaussian}. All algorithms are simulated with a fixed step-size of $\Delta t = 0.1$ for $K=1000$ iterations. The performance is measured by computing the  mean-squared error in estimating the expectation of the function $\psi(x)=x{1}_{x\geq 0}$. The mean-square error at the $k$-th iteration is computed by averaging the error over $M=100$ runs:
	\begin{equation}\label{eq:mse-error}
	\text{m.s.e}_k=\frac{1}{M}\sum_{m=1}^M\left(\frac{1}{N}\sum_{i=1}^N \psi(X^{i,m}_{t_k})-\int \psi(x)\pr_\infty(x)\ud x\right)^2
	\end{equation}
%	For the particle size of $N < 500$ and $N\geq 500$, we simulate $M=100$ and $M=10$ times respectively.
%	 times, and for larger sample size $M=10$ times. 
	
%	We consider samples of size $N \in \{10,20,50,100,200,500,1000\}$ and 
	The numerical results are depicted in Figure~\ref{fig:mixture-kernel-comparison-N}. Figure~\ref{fig:comparison-N} depicts the m.s.e as a function of $N$. It is observed that the accelerated algorithm~\ref{alg:Gaussian-kernel} with the diffusion map approximation admits an order of magnitude better m.s.e for the same number of particles. 
	It is also observed that the m.s.e decreases rapidly for intermediate values of $N$ before saturating for large values of $N$, where the bias term dominates~(see discussion following Eq.~\ref{eq:interaction-kernel}).
%	 At large $N$ regime, the error is dominated by the bias error of the kernel-approximation algorithm, while at small $N$ regime,  the variance is dominating (see the discussion after Eq.~\eqref{eq:interaction-kernel}).
	
	Figure~\ref{fig:comparison-K} depicts the m.s.e as a function of the number of iterations for a fixed number of particles $N=100$. It is observed that the accelerated algorithm~\ref{alg:Gaussian-kernel} displays the quickest convergence amongst the algorithms tested. 
	  
	 Figure~\ref{fig:comparison-time} depicts the average computational time per iteration as a function of the number of samples $N$. The computational time of the diffusion map approximation scales as $O(N^2)$ because it involves computing a $N \times N$ matrix $[\keps(X^i,X^j)]_{i,j=1}^N$, while the computational cost of the MCMC and HMCMC algorithms scale as $O(N)$. 
%	 In practice, this cost is usually dominated with the cost of evaluating the gradient $\nabla f(x)$ since it may be sufficient to use small number of particles. In general, 
	 The computational complexity may be improved by (i) exploiting the sparsity structure of the $N\times N$ matrix ; (ii) sub-sampling the particles in computing the empirical averages; (iii) adaptively updating the $N\times N$ matrix according to a certain error criteria. 
	 	
	Finally, we provide comparison between diffusion map approximation \eqref{eq:density-estimation} and the density-based approximation ~\eqref{eq:density-estimation}: Figure~\ref{fig:comparison-eps} depicts the m.s.e for these two approximations as a function of the kernel-bandwidth $\epsilon$ for  a fixed number of particles $N=100$. For very large and for very small values of $\epsilon$, where bias and variance dominates the error, respectively, the two algorithms have similar m.s.e. However, for intermediate values of $\epsilon$, the diffusion map approximation has smaller variance, and thus lower m.s.e.

	\section{Conclusion and directions for future work}\label{sec:conclusion}
	The main contribution of this paper is to extend the variational
	formulation of~\cite{wibisono2016} to obtain theoretical results and
	numerical algorithms for accelerated gradient flow in the space of
	probability distributions.  In continuous-time settings, bounds on
	convergence rate are derived based on a Lyapunov function argument.
	Two numerical algorithms based upon an interacting particle
	representation are presented and illustrated with examples. As has been the case in
	finite-dimensional settings, the theoretical framework is expected to
	be useful in this regard.  Some direction for future include: (i) removing the technical assumption in the proof of the Theorem~\ref{thm:main-res}; (ii) analysis of the convergence under the weaker assumption that the target distribution satisfies only a spectral gap condition; and (iii)  analysis
	of the numerical algorithms in the finite-$N$ and in the finite $\Delta t$ cases.

	% We presented a variational formulation to construct accelerated gradient flow for probability distributions. A continuous time convergence rate was presented based on a Lyapunov function argument. Two numerical algorithms in the form of system of interacting particles were presented to implement the flow when the functional is the relative entropy with respect to a target distribution.Obtaining convergence result for the discrete time system and with finite number of particles $N$ is an open problem.
	
	%		Such extension is useful for designing accelerated stochastic gradient descent and Hamiltonian MCMC algorithms. The convergence rate follows from a Lyapunov function argument based on the assumption the target probability distribution is log-concave. We conjecture that, it is possible to obtain convergence rate under weaker assumption that the target distribution satisfies a spectral gap.  Finding the appropriate Lyapunov function is an open problem. 	
	%	\begin{equation*}
	%		\frac{\ud}{\ud t} V(\pr_t,u_t,t) = e^{\beta_t}\left( \left. \frac{\ud}{\ud s}W_2^2(q_s,\pr_\infty)\right|_{s=0} + D(\pr_t,\pr_\infty) \right)
	%	\end{equation*}
	%	where $\frac{\partial q_s}{\partial s} = \nabla_W D(q_s|\pr_\infty)$ with $q_0=\pr_t$.
	%	\end{romannum}
		 	\begin{figure}[H]
		\begin{tabular}{cc}
			\subfigure[]{
				\includegraphics[width=0.5\columnwidth]{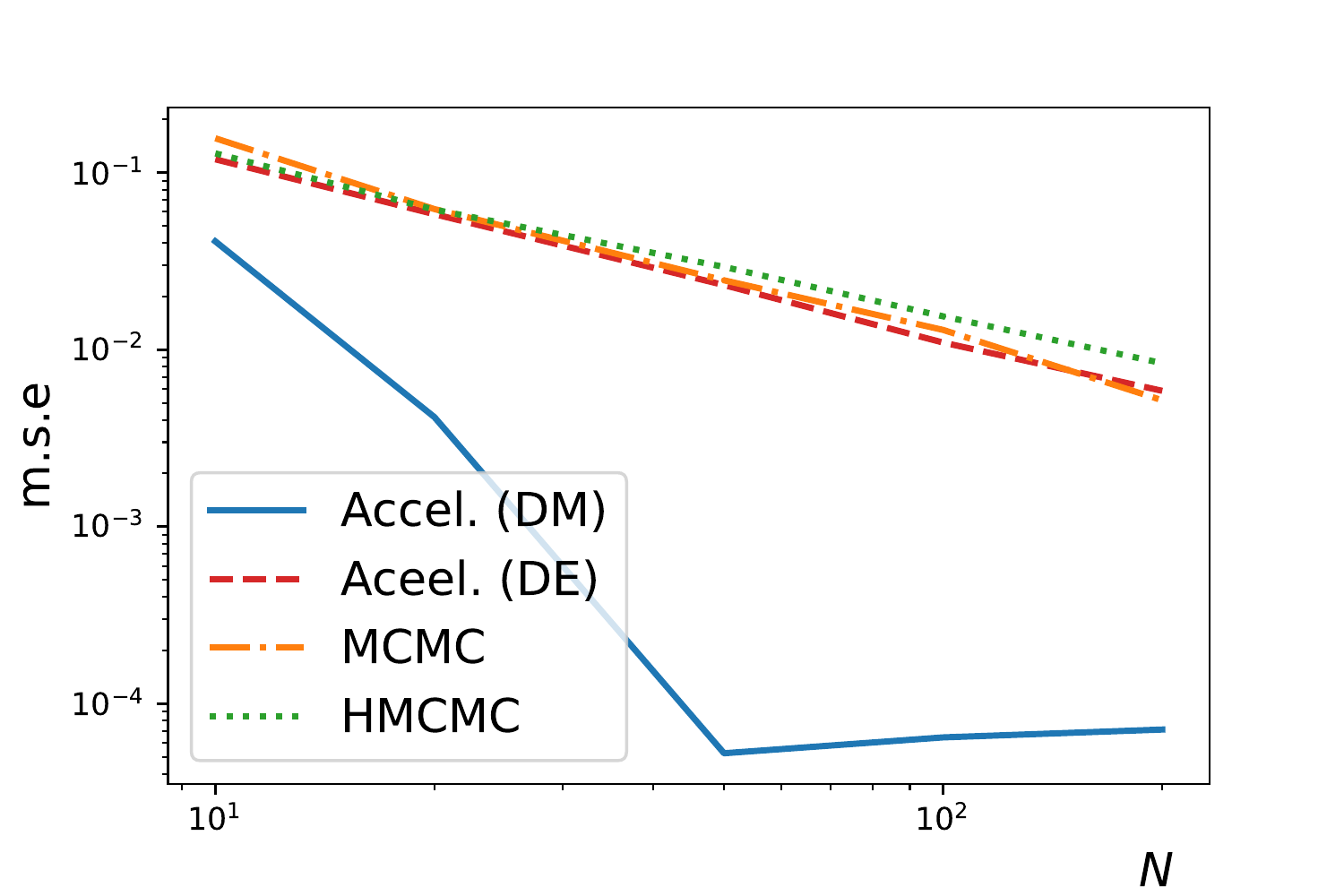}\label{fig:comparison-N}
				
			}&
			\subfigure[]{
				\includegraphics[width=0.5\columnwidth]{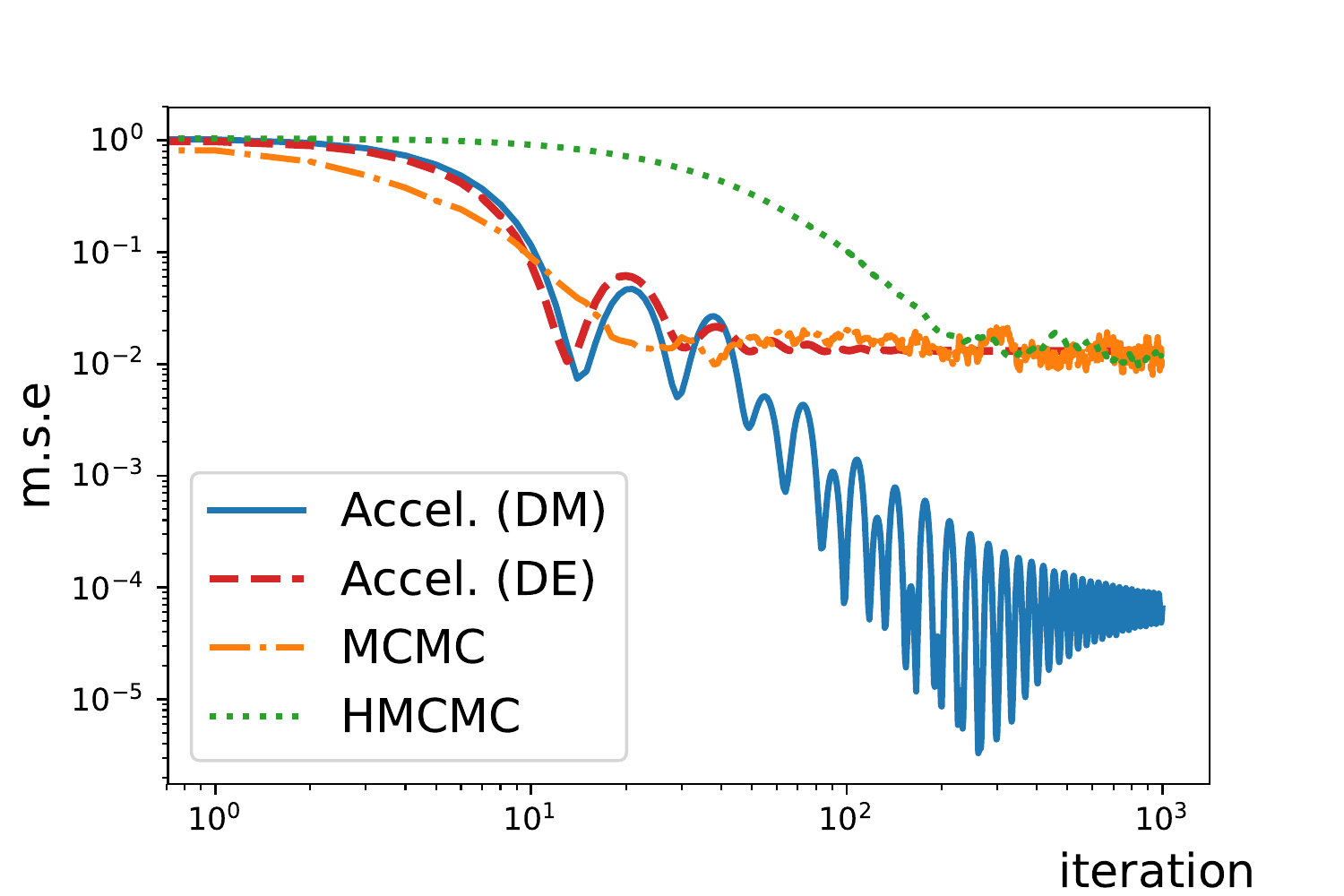}
				\label{fig:comparison-K}
			}\\
			\subfigure[]{
				\includegraphics[width=0.5\columnwidth]{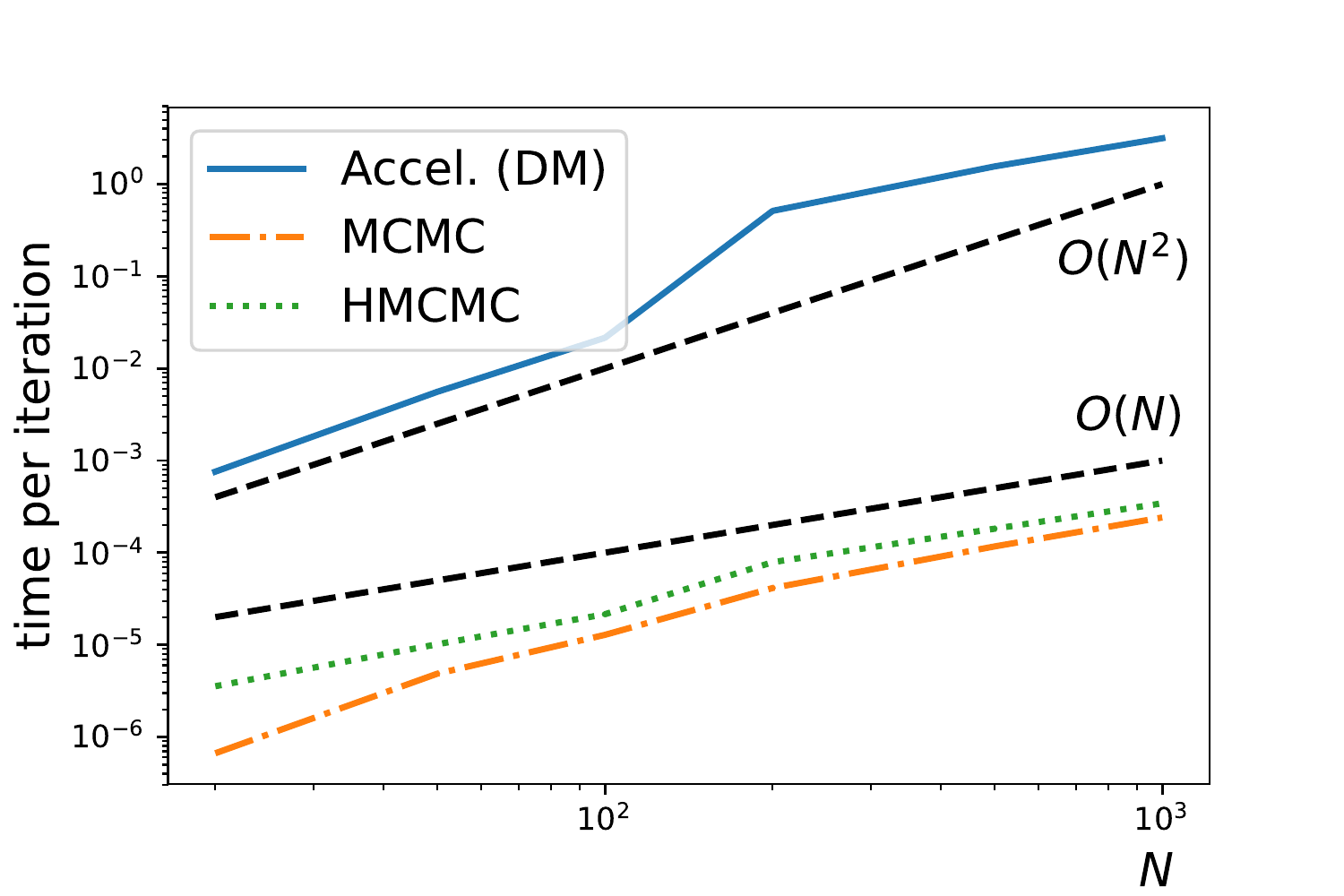}
				\label{fig:comparison-time}
			}&
			\subfigure[]{
				\includegraphics[width=0.5\columnwidth]{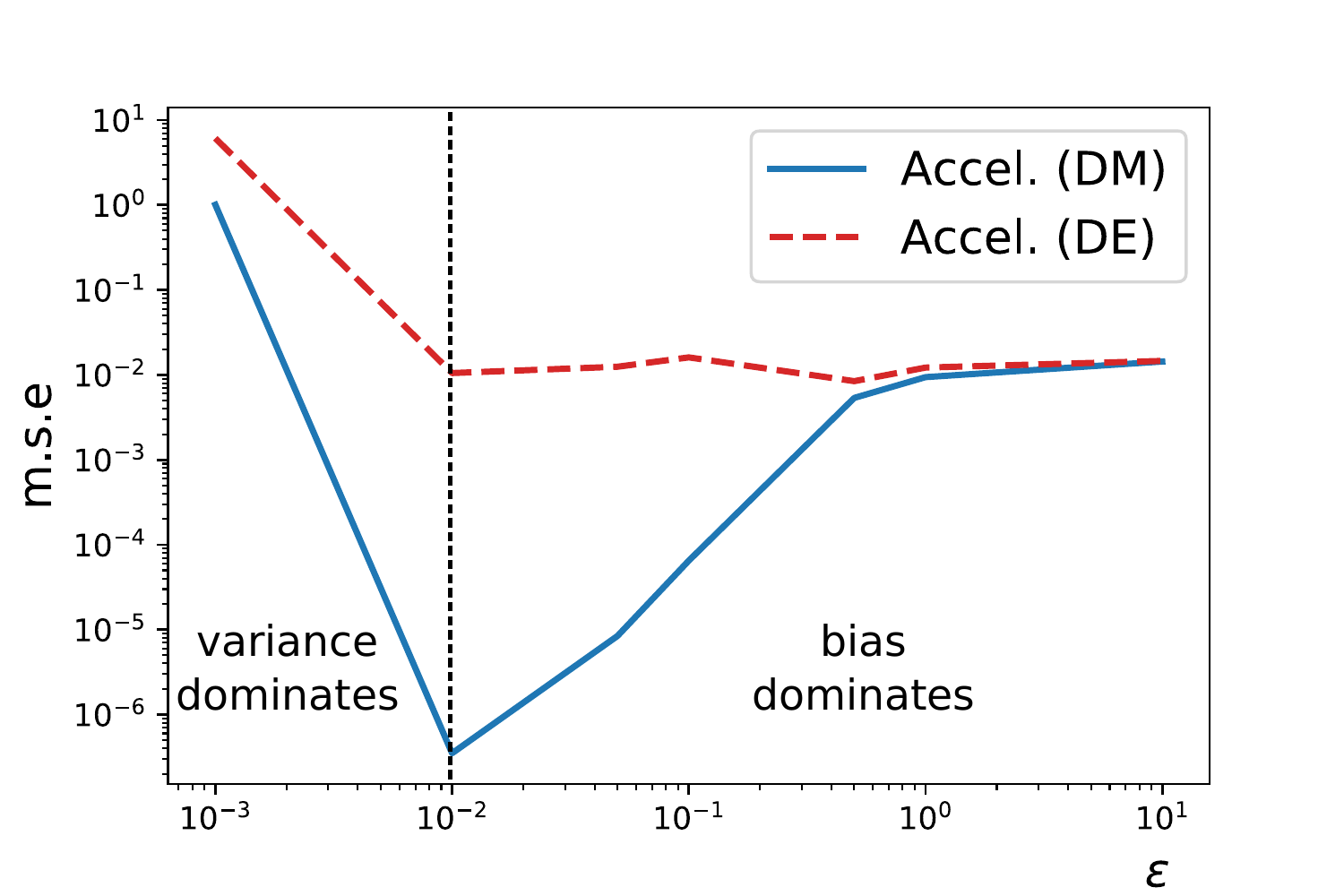}
				\label{fig:comparison-eps}
			}
		\end{tabular}
		\caption{Simulation-based comparison of the performance of the accelerated algorithm~\ref{alg:Gaussian-kernel} using the diffusion map~(DM) approximation~\eqref{eq:interaction-kernel}, the density estimation~(DE)-based approximation~\eqref{eq:density-estimation} with the  MCMC and HMCMC algorithms: (a) the mean-squared error (m.s.e)~\eqref{eq:mse-error} as a function of the number of samples $N$; (b) the m.s.e as a function of the number of iterations; (c) the average computational time per iteration as a function of the number of samples; (d) m.s.e comparison between the diffusion map and the density estimation-based approaches as a function of the kernel bandwidth $\epsilon$.}
		\label{fig:mixture-kernel-comparison-N}
	\end{figure}
	
	\bibliographystyle{plain}
	\bibliography{optimization}

	\appendix
	\section{PDE formulation of the variational problem }\label{apdx:pde}
	An equivalent pde formulation is obtained by
	considering the stochastic optimal control
	problem~\eqref{eq:var-problem-X} as a deterministic optimal control
	problem on the space of the probability distributions. 
	Specifically, the process $\{\rho_t\}_{t\geq 0}$ is a deterministic
	process that takes values in $\Pspace(\Re^d)$ and evolves according to the continuity equation
	\begin{equation*}
	\frac{\partial \rho_t}{\partial t} = -\nabla \cdot (\rho_t u_t)
	\end{equation*}
	where $u_t:\Re^d \to \Re^d$ is now a time-varying vector field. 
	The Lagrangian $\LP:\Re^+\times \Pspace(\Re^d) \times L^2(\Re^d;\Re^d) \to \Re$ is defined as:
	\begin{equation}
	\LP(t,\pr,u):= e^{\alpha_t+\gamma_t}\left[\int_{\Re^d}
	\frac{1}{2}|e^{-\alpha_t}u(x)|^2\pr(x) \ud x -e^{\beta_t}{\sf F}(\pr)\right]
	\label{eq:Lagrangian-p}
	\end{equation}
	The optimal control problem is:
	\begin{equation}
	\begin{aligned}
	\text{Minimize}\quad& \int_0^\infty \LP (t,\rho_t,u_t) \ud t \\
	\text{Subject to}\quad& \frac{\partial \pr_t}{\partial t} + \nabla \cdot (\pr_t u_t) = 0
	\end{aligned}
	\label{eq:var-problem-p}
	\end{equation}
	
	The Hamiltonian function $\HP:\Re^+\times \Pspace(\Re^d)\times \fSpace \times L^2(\Re^d;\Re^d) \to \Re$ is 
	\begin{equation}\label{eq:Hamilton-function-p}
	\HP (t,\rho,\phi,u):= \lr{\nabla \phi}{u}_{L^2(\pr)}- \LP(t,\rho,u)
	\end{equation}
	where  $\phi \in \fSpace$ is the dual variable and  the inner-product $\lr{\nabla \phi}{u}_{L^2(\pr)}:=\int_{\Re^d} \nabla \phi(x)\cdot u(x)\pr(x)\ud x$
	
	%	Assume the functional $F$ is displacement convex and admits a unique minimizer denoted by $\pr_\infty$. Define the Lyapunov function $V:\Re^+\times \Pspace(\Re^d)\times \fSpace$ according to:
	%	\begin{equation}\label{eq:Lyapunov-function}
	%	V(t,\pr,\phi) = \Expect_{X\sim \pr}[|X + e^{-\gamma_t}\nabla \phi(X) - T(X)|^2] + e^{\beta_t}(F(\pr)-F(\pr_\infty)
	%	\end{equation}
	%	where the map $T:\Re^d\to \Re^d$ is the optimal transport map from $\pr$ to $\pr_\infty$.  
	\section{Restatement of the main result and its proof} \label{apdx:proof}
	We restate Theorem~\ref{thm:main-res} below which now includes the pde formulation as well. 
	\begin{theorem}\label{thm:main-res-2}
		Consider the variational problem
		\eqref{eq:var-problem-X}-\eqref{eq:var-problem-p}.  
		\begin{romannum}
			\item For the probabilistic form~\eqref{eq:var-problem-X} of the
			variational problem, the optimal control $U_t^* =
			e^{\alpha_t-\gamma_t}Y_t$, where the optimal trajectory $\{(X_t,Y_t)\}_{t \geq 0}$
			evolves according to the Hamilton's odes:
			\begin{subequations}				\label{eq:Hamilton-X}
				\begin{align}
				\frac{\ud X_t}{\ud t}  &= U_t^* =
				e^{\alpha_t-\gamma_t}Y_t,\quad
				X_0 \sim \rho_0\\
				\frac{\ud Y_t}{\ud t}  &=  - e^{\alpha_t+\beta_t+\gamma_t}\nabla_\pr F(\pr_t)(X_t),\quad Y_0=\nabla \phi_0(X_0)
				\end{align}
			\end{subequations}
			where $\phi_0$ is a convex function, and $\pr_t = \text{Law}(X_t)$.
			\vspace{-10pt}
			%				Therefore, the optimal control is~$U_t = e^{\alpha_t-\gamma_t}Y_t$.	
			\item For the pde form~\eqref{eq:var-problem-p} of the variational problem, the optimal control is $u_t^*
			= e^{\alpha_t-\gamma_t} \nabla \phi_t(x)$, where the
			optimal trajectory $\{(\rho_t,\phi_t)\}_{t \geq 0}$
			evolves according to the Hamilton's pdes:
			\begin{subequations}\label{eq:Hamilton-p}
				\begin{align}
				\frac{\partial \pr_t}{\partial t} &= -\nabla \cdot (\pr_t
				\underbrace{e^{\alpha_t-\gamma_t}
					\nabla \phi_t}_{u_t^*}
				),\quad \text{\it initial
					condn.} \;\;\rho_0 \label{eq:acc-flow-p}\\
				\frac{\partial \phi_t}{\partial t} &= -e^{\alpha_t-\gamma_t} \frac{|\nabla \phi_t|^2}{2} - e^{\alpha_t + \gamma_t + \beta_t}\nabla_{\pr}F(\pr)\label{eq:acc-flow-phi}
				\end{align}
			\end{subequations}
			%	Therefore, the optimal control is $u_t = e^{\alpha_t-\gamma_t} \nabla \phi_t(x)$.
			\item The solutions of the two forms are equivalent in the
			following sense:
			\begin{equation*}
			\text{Law}(X_t) = \rho_t,\quad U_t=u_t(X_t),\quad Y_t= \nabla \phi_t(X_t)
			\end{equation*}	
			\item Suppose additionally that the functional $F$ is displacement convex and $\pr_\infty$ is its minimizer. Define
			\begin{equation}\label{eq:Lyapunov-function-2}
			V(t) = \frac{1}{2}\Expect (|X_t + e^{-\gamma_t}Y_t -
			T_{\pr_t}^{\pr_\infty}(X_t)|^2) + e^{\beta_t}(F(\pr)-F(\pr_\infty))
			\end{equation}
			where the map $T_{\pr_t}^{\pr_\infty}:\Re^d\to \Re^d$ is the optimal transport map from $\pr_t$ to $\pr_\infty$.  Suppose also that the following technical assumption holds: $\Expect[(X_t+e^{-\gamma_t}Y_t -
			T^{\pr_\infty}_{\pr_t}(X_t))\cdot \frac{\ud}{\ud t} T^{\pr_\infty}_{\pr_t}(X_t)]=0$. Then $\frac{\ud V}{\ud t}(t) \leq 0$. Consequently, the following rate of convergence  is obtained along the optimal trajectory
			\begin{equation*}
			F(\pr_t)-F(\pr_\infty) \leq O(e^{-\beta_t}),\quad \forall  t\geq 0
			\end{equation*}
		\end{romannum}
		%The minimizing $u_t=e^{\alpha_t-\gamma_t} \nabla \phi_t$, where $\phi:[0,\infty)\times \Re^d \to \Re$ satisfies the HJB equation:
		%%The minimizing probability density function satisfies the following system of Hamiltonian equations:
		%\begin{align}
		%%\frac{\partial \pr_t}{\partial t} &= -\nabla(\pr_t e^{\alpha_t-\gamma_t} \nabla \phi_t ) \label{eq:acc-flow-p}\\
		%\frac{\partial \phi_t}{\partial t} &= -e^{\alpha_t-\gamma_t} \frac{|\nabla \phi_t|^2}{2} - e^{\alpha_t + \gamma_t + \beta_t}\log(\frac{\pr_t}{\pr_{\infty}})
		%\label{eq:acc-flow-phi}
		%\end{align}  
		%where $\phi:\Re^d\to \Re$ is a real-valued function dual to $\rho_t$.
		% 	
		%and the following convergence rate holds:
		%\begin{equation}
		%D(\pr_t||\pr_{\infty}) \leq O(e^{-\beta_t})
		%\end{equation}
	\end{theorem}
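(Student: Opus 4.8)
I would establish parts (i)--(iii) by routine (mean-field) optimal control and reserve the real work for part (iv). For the probabilistic form \eqref{eq:var-problem-X}, apply the Pontryagin maximum principle for mean-field problems \cite[Sec.~6.2.3]{carmona2017probabilistic}: stationarity of $\HX(t,x,\rho,y,u)=u\cdot y-\LX(t,x,\rho,u)$ in $u$ gives $u^\star=e^{\alpha_t-\gamma_t}y$, the state equation $\dot X_t=u_t^\star$, and the adjoint equation $\dot Y_t=-\nabla_x\HX$; the mean-field coupling term, i.e.\ the derivative of ${\sf F}(\rho)=\int\tilde F(\rho,x)\rho(x)\ud x$ in the law, together with the identity $\nabla_\pr{\sf F}(\rho)=\nabla\tfrac{\partial{\sf F}}{\partial\rho}(\rho)$, collapses the adjoint equation to $\dot Y_t=-e^{\alpha_t+\beta_t+\gamma_t}\nabla_\pr{\sf F}(\rho_t)(X_t)$, and taking $Y_0=\nabla\phi_0(X_0)$ with $\phi_0$ convex keeps the flow in the optimal-transport cotangent structure. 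For the pde form \eqref{eq:var-problem-p}, the same stationarity applied to $\HP(t,\rho,\phi,u)=\lr{\nabla\phi}{u}_{L^2(\pr)}-\LP(t,\rho,u)$ gives $u^\star_t=e^{\alpha_t-\gamma_t}\nabla\phi_t$, the continuity equation \eqref{eq:acc-flow-p}, and the Hamilton--Jacobi-type equation \eqref{eq:acc-flow-phi} for $\phi_t$ (the Gâteaux derivative in $\rho$ produces the $\tfrac12 e^{\alpha_t-\gamma_t}|\nabla\phi_t|^2$ term). Equivalence (iii) then follows by the method of characteristics: $\dot X_t=e^{\alpha_t-\gamma_t}\nabla\phi_t(X_t)$ with $X_0\sim\rho_0$ propagates $\rho_0$ along \eqref{eq:acc-flow-p}, so $\text{Law}(X_t)=\rho_t$; setting $Y_t:=\nabla\phi_t(X_t)$ and differentiating, the Hessian terms cancel against $\nabla$ of \eqref{eq:acc-flow-phi}, leaving exactly the adjoint equation of (i) with $Y_0=\nabla\phi_0(X_0)$, while $U_t=\dot X_t=e^{\alpha_t-\gamma_t}Y_t=u_t(X_t)$.

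For part (iv), write $Z_t:=X_t+e^{-\gamma_t}Y_t-T^{\pr_\infty}_{\pr_t}(X_t)$ and differentiate $V$ along the optimal trajectory. The scaling relations $\dot\beta_t=\dot\gamma_t=p/t=e^{\alpha_t}$ yield the key cancellation $\dot X_t+\tfrac{\ud}{\ud t}(e^{-\gamma_t}Y_t)=e^{-\gamma_t}\dot Y_t=-e^{\alpha_t+\beta_t}\nabla_\pr{\sf F}(\rho_t)(X_t)$, so the kinetic part of $\dot V$ equals $-e^{\alpha_t+\beta_t}\Expect[Z_t\cdot\nabla_\pr{\sf F}(\rho_t)(X_t)]-\Expect[Z_t\cdot\tfrac{\ud}{\ud t}T^{\pr_\infty}_{\pr_t}(X_t)]$, and the second term is killed by the stated technical assumption. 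Expanding $Z_t$ in the first term, the $e^{-\gamma_t}Y_t$ contribution is $-e^{\alpha_t+\beta_t-\gamma_t}\Expect[Y_t\cdot\nabla_\pr{\sf F}(\rho_t)(X_t)]$, which cancels $e^{\beta_t}\tfrac{\ud}{\ud t}{\sf F}(\rho_t)$ — the latter obtained by differentiating ${\sf F}$ along the continuity equation with velocity $u_t=e^{\alpha_t-\gamma_t}\nabla\phi_t$, giving $\tfrac{\ud}{\ud t}{\sf F}(\rho_t)=e^{\alpha_t-\gamma_t}\Expect[\nabla_\pr{\sf F}(\rho_t)(X_t)\cdot Y_t]$. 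Using $\dot\beta_t e^{\beta_t}=e^{\alpha_t+\beta_t}$ on the remaining potential term, what is left is
\[
\frac{\ud V}{\ud t}(t)=e^{\alpha_t+\beta_t}\Bigl(\Expect\bigl[(T^{\pr_\infty}_{\pr_t}(X_t)-X_t)\cdot\nabla_\pr{\sf F}(\rho_t)(X_t)\bigr]+{\sf F}(\rho_t)-{\sf F}(\rho_\infty)\Bigr).
\]

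Now I would invoke the variational-inequality characterization of displacement convexity \cite[Eq.~10.1.7]{ambrosio2008gradient} with $\rho=\rho_t$ and $\mu=\rho_\infty$: ${\sf F}(\rho_\infty)\ge{\sf F}(\rho_t)+\Expect[\nabla_\pr{\sf F}(\rho_t)(X_t)\cdot(T^{\pr_\infty}_{\pr_t}(X_t)-X_t)]$ (the $W_2^2$ remainder carries a nonnegative coefficient). Hence the parenthesis above is $\le0$, so $\tfrac{\ud V}{\ud t}(t)\le0$. Since $V(t)\ge e^{\beta_t}({\sf F}(\rho_t)-{\sf F}(\rho_\infty))\ge0$, monotonicity of $V$ gives ${\sf F}(\rho_t)-{\sf F}(\rho_\infty)\le V(0)e^{-\beta_t}=O(e^{-\beta_t})$.

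The main obstacle is the term $\Expect[Z_t\cdot\tfrac{\ud}{\ud t}T^{\pr_\infty}_{\pr_t}(X_t)]$: every other step is an algebraic identity forced by the scaling of $\alpha_t,\beta_t,\gamma_t$ together with displacement convexity, whereas dispensing with the technical assumption would require establishing time-differentiability and enough regularity of the optimal transport map $T^{\pr_\infty}_{\pr_t}$ along the accelerated flow (for instance via the continuity/Monge--Amp\`ere equation it satisfies) and an orthogonality/integration-by-parts argument showing this term is $\le 0$, which I do not see how to do in full generality. A secondary, more routine, gap is global well-posedness and smoothness of the Hamilton pdes \eqref{eq:Hamilton-p}, which is needed for the characteristics argument in (iii); I would impose this as a regularity hypothesis, noting that it holds in the Gaussian and log-concave cases treated later in the paper.
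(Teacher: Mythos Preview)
Your proposal is correct and follows essentially the same approach as the paper's own proof: the same Pontryagin/mean-field maximum principle arguments for (i) and (ii) with the same references, the same characteristics computation $Y_t=\nabla\phi_t(X_t)$ for (iii), and in (iv) the same Lyapunov calculation exploiting the scaling identities $\dot\gamma_t=\dot\beta_t=e^{\alpha_t}$, the technical assumption to remove the transport-map derivative, and the displacement-convexity variational inequality \cite[Eq.~10.1.7]{ambrosio2008gradient}. Your presentation is slightly tidier (introducing $Z_t$ and isolating the cancellation $\dot X_t+\tfrac{\ud}{\ud t}(e^{-\gamma_t}Y_t)=e^{-\gamma_t}\dot Y_t$), and your closing remarks on the regularity gaps---well-posedness of the Hamilton pdes and the status of the technical assumption---are apt and match the paper's own caveats.
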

	\begin{proof}  
		\begin{romannum}
			\item  The Hamiltonian function defined in~\eqref{eq:Hamilton-function-X} is equal to
			\begin{equation*}
			\HX(t,x,\pr,y,u) = y \cdot u - e^{\gamma_t-\alpha_t} \frac{1}{2}|u|^2+e^{\alpha_t+\gamma_t\beta_t}\tilde{F}(\pr,x)
			\end{equation*}		
			after inserting the formula for the Lagrangian.
			According to the maximum principle in probabilistic
			form for (mean-field) optimal control problems~(see \cite[Sec. 6.2.3]{carmona2017probabilistic}),
			the optimal control law $U^*_t=\argmin_v \HX(t,X_t,\rho_t,Y_t,v)=e^{\alpha_t - \gamma_t}Y_t $ and the Hamilton's equations are
			\begin{align*}
			\frac{\ud {X}_t}{\ud t} &= +\nabla_y \HX (t,X_t,\rho_t,Y_t,U^*_t)=U^*_t= e^{\alpha_t - \gamma_t}Y_t\\
			\frac{\ud {Y}_t}{\ud t} &= -\nabla_x \HX (t,X_t,\rho_t,Y_t,U^*_t) -  \tilde{\Expect}[\nabla_\rho \HX(t,\tilde{X}_t,\rho_t,\tilde{Y}_t,\tilde{U}^*_t)(X_t)]
			%		\\&=-e^{\alpha_t+\gamma_t+\beta_t}\nabla f(X_t) - e^{\alpha_t+\gamma_t+\beta_t}\nabla \log(\pr_t(X_t))
			\end{align*}
			where $\tilde{X}_t,\tilde{Y_t},\tilde{U}^*_t$ are independent copies of $X_t,Y_t,U_t^*$. The derivatives  
			\begin{align*}
			\nabla_x \HX (t,x,\rho,y,u) &=
			e^{\alpha_t+\beta_t+\gamma_t}\nabla_x
			\tilde{F} (\pr,x)\\
			\nabla_\pr \HX (t,x,\rho,y,u) &= e^{\alpha_t+\beta_t+\gamma_t}\nabla_\pr \tilde{F}(\pr,x)		
			\end{align*}
			It follows that 
			\begin{align*}
			\frac{\ud {Y}_t}{\ud t} =
			-e^{\alpha_t+\beta_t+\gamma_t}\left(\nabla_x
			\tilde{F} (\pr_t,X_t)  + \tilde{\Expect}[\nabla_\pr \tilde{F}(\pr_t,\tilde{X}_t)(X_t)]\right) =  -e^{\alpha_t+\beta_t+\gamma_t}\nabla_\pr {\sf F}(\pr)(X_t)
			\end{align*}
			where we used the definition ${\sf F}(\pr) = \int
			\tilde{F}(x,\pr)\pr(x)\ud x$ and the
			identity~\cite[Sec. 5.2.2 Example
			3]{carmona2017probabilistic} \[\nabla_\pr {\sf F}(\pr)(x) =
			\nabla_x \tilde{F} (\pr,x) + \int \nabla_\pr \tilde{F}(\pr,\tilde{x})(x)\pr(\tilde{x})\ud \tilde{x}\]
			%Evaluating the derivatives concludes 
			%The derivatives are evaluated as
			%%\begin{align*}
			%\partial_yH(t,x,\rho,y,u) &= u,\quad  \partial_xH(t,x,\rho,y,u) = e^{\alpha_t+\gamma_t+\beta_t}\nabla f(x),\quad 
			%\nabla_\rho  H(t,x,\rho,y,u)= e^{\alpha_t+\gamma_t+\beta_t} \nabla \log (\rho) 
			%\end{align*}
			%concluding 
			%equations~\eqref{eq:Hamilton-X}.  
			%\begin{align*}
			%
			%\end{align*}
			\item The Hamiltonian function defined in~\eqref{eq:Hamilton-function-p} is equal to
			\begin{equation*}
			{\HP}(t,\pr,\phi,u) = \int \left[\nabla \phi (x) \cdot u(x) -\frac{1}{2}e^{\gamma_t-\alpha_t}|u(x)|^2\right]\pr(x)\ud x  + e^{\alpha_t+\gamma_t+\beta_t}{\sf F}(\pr)
			\end{equation*}	
			after inserting the formula for the Lagrangian.
			According to the maximum principle for pde formulation of mean-field optimal control problems~(see \cite[Sec. 6.2.4]{carmona2017probabilistic}) the optimal control vector field is $u^*_t = \argmin_v {\HP}(t,\rho_t,\phi_t,v)=e^{\alpha_t-\gamma_t}\nabla \phi_t$ and the Hamilton's equations are:
			\begin{align*}
			\frac{\partial \rho_t}{\partial t} &= +\frac{\partial \HP}{\partial \phi}(t,\rho_t,\phi_t,u_t) = -\nabla \cdot(\rho_t \nabla u^*_t) \\
			\frac{\partial \phi_t}{\partial t} &= -\frac{\partial \HP}{\partial \pr}(t,\rho_t,\phi_t,u_t) = -(\nabla \phi \cdot u^* - e^{\gamma_t-\alpha_t}\frac{1}{2}|u^*_t|^2 + e^{\alpha_t+\gamma_t+\beta_t} \frac{\partial {\sf F}}{\partial \pr}(\pr_t))
			\end{align*}
			inserting the formula $u^*_t=e^{\alpha_t-\gamma_t}\nabla \phi_t$ concludes the result.
			%where $\partial_\rho$ is the Gâteaux derivative of $\HP$ with respect to $\rho$ . 
			%Evaluating these expressions yields the system of equations~\eqref{eq:Hamilton-p}.
			\item Consider the $(\pr_t,\phi_t)$ defined from~\eqref{eq:Hamilton-p}. The distribution $\pr_t$ is identified with a stochastic process $\tilde{X}_t$ such that $\frac{\ud \tilde{X}_t}{\ud t}=e^{\alpha_t-\gamma_t} \nabla \phi_t(\tilde{X}_t)$ and $\text{Law}(\tilde{X}_t)=\pr_t$. Then define $\tilde{Y}_t = \nabla \phi_t(\tilde{X}_t)$. Taking the time derivative shows that 
			\begin{align*}
			\frac{\ud \tilde{Y}_t}{\ud t} &= \frac{\ud}{\ud t}\nabla \phi_t(\tilde{X}_t)=\nabla^2 \phi_t(\tilde{X}_t) \frac{\ud \tilde{X}_t}{\ud t} + \nabla \frac{\partial \phi_t}{\partial t}(X_t)\\&=e^{\alpha_t-\gamma_t}\nabla^2 \phi_t(\tilde{X}_t)\nabla \phi_t(\tilde{X}_t) - e^{\alpha_t-\gamma_t}\nabla^2 \phi_t(\tilde{X}_t)\nabla \phi_t(X_t) - e^{\alpha_t+\beta_t+\gamma_t} \nabla \frac{\partial{\sf F}}{\partial \pr}(\pr_t)(\tilde{X}_t)\\
			&=- e^{\alpha_t+\beta_t+\gamma_t} \nabla \frac{\partial {\sf F}}{\partial \pr}(\pr_t)(\tilde{X}_t)\\
			&=- e^{\alpha_t+\beta_t+\gamma_t} \nabla_\pr {\sf F}(\pr_t)(\tilde{X}_t)
			\end{align*}
			with the initial condition $\tilde{Y}_0=\nabla \phi_0(\tilde{X}_0)$, where we used the identity $\nabla_x \frac{\partial {\sf F}}{\partial \pr}(\pr) = \nabla_\pr {\sf F}(\pr)$~\cite[Prop. 5.48]{carmona2017probabilistic}. Therefore the equations for $\tilde{X}_t$ and $\tilde{Y}_t$ are identical. Hence one can identify $(X_t,Y_t)$ with $(\tilde{X}_t,\tilde{Y}_t)$.
			\item The energy functional 
			\begin{equation*}
			V(t) = \underbrace{\frac{1}{2}\Expect \left[|X_t + e^{-\gamma_t}Y_t - T_{\pr_t}^{\pr_\infty}(X_t)|^2\right]}_{\text{first term}} + \underbrace{e^{\beta_t}({\sf F}(\pr)-{\sf F}(\pr_\infty))}_{\text{second term}}
			\end{equation*}	
			Then the derivative of the first term is
			\begin{align*}
			\Expect\left[(X_t+e^{-\gamma_t}Y_t - T^{\pr_\infty}_{\pr_t}(X_t))\cdot (e^{\alpha_t-\gamma_t}Y_t - \dot{\gamma}_te^{-\gamma_t}Y_t - e^{\alpha_t+\beta_t}\nabla_\pr \FP(\pr_t)(X_t) + \xi(T^{\pr_\infty}_{\pr_t}(X_t)))\right]
			\end{align*}
			where $\xi(T^{\pr_\infty}_{\pr_t}(X_t)):=\frac{\ud}{\ud t}T^{\pr_\infty}_{\pr_t}(X_t)$. Using the scaling condition $\dot{\gamma}_t=e^{\alpha_t}$ the derivative of the first term simplifies to
			\begin{align*}
			\Expect\left[(X_t+e^{-\gamma_t}Y_t - T^{\pr_\infty}_{\pr_t}(X_t))\cdot(- e^{\alpha_t+\beta_t}\nabla_\pr {\sf F}(\pr_t)(X_t) + \xi(T^{\pr_\infty}_{\pr_t}(X_t)))\right]
			\end{align*}		
			Upon using the technical assumption,
			$\Expect[(X_t+e^{-\gamma_t}Y_t -
			T^{\pr_\infty}_{\pr_t}(X_t))\cdot \xi(T^{\pr_\infty}_{\pr_t}(X_t))]=0$ the derivative of the first term simplifies to
			\begin{align*}
			\Expect\left[(X_t+e^{-\gamma_t}Y_t - T^{\pr_\infty}_{\pr_t}(X_t))\cdot(- e^{\alpha_t+\beta_t}\nabla_\pr F(\pr_t)(X_t))\right]
			\end{align*}
			The derivative of the second term is
			\begin{align*}
			\frac{\ud}{\ud t}(\text{second term})&=\dot{\beta}_t e^{\beta_t}({\sf F}(\pr_t)-{\sf F}(\pr_\infty)) + e^{\beta_t}\frac{\ud}{\ud t} {\sf F}(\pr_t)\\
			&=e^{\alpha_t+\beta_t}({\sf F}(\pr_t)-{\sf F}(\pr_\infty)) + e^{\beta_t}\Expect[\nabla_\pr {\sf F}(\pr_t)(X_t)e^{\alpha_t-\gamma_t}Y_t]
			\end{align*}
			where we used the scaling condition $\dot{\beta_t}=e^{\alpha_t}$ and the chain-rule for the Wasserstein gradient~\cite[Ch. 10, E. Chain rule]{ambrosio2008gradient}. Adding the derivative of the first and second term yields:
			\begin{align*}
			\frac{\ud V}{\ud t}(t)=e^{\alpha_t+\beta_t}\left({\sf F}(\pr_t)-{\sf F}(\pr_\infty) - \Expect\left[(X_t- T^{\pr_\infty}_{\pr_t}(X_t))\cdot \nabla_\pr {\sf F}(\pr_t)(X_t)\right]\right)
			\end{align*}
			which is negative by variational inequality characterization of the displacement convex function ${\sf F}(\pr)$~\cite[Eq. 10.1.7]{ambrosio2008gradient}.
			
			We expect that the technical assumption can be
			removed.  This is the subject of the continuing work.

		\end{romannum}
	\end{proof}
%\section{numerical studies}

\section{Wasserstein gradient and Gâteaux derivative}\label{apdx:derivative}
This section contains definitions of the Wasserstein gradient and Gâteaux derivative~\cite{ambrosio2008gradient,carmona2017probabilistic}.  

Let $\FP:\Pspace(\Re^d) \to \Re$ be a (smooth) functional on the space of probability distributions. 

\newP{Gâteaux derivative} The Gâteaux derivative of $\FP$ at $\pr \in \Pspace(\Re^d)$ is a real-valued function on $\Re^d$ denoted as $\frac{\partial \FP}{\partial \pr}(\pr):\Re^d\to\Re$. It is  defined as a function that satisfies the identity
\begin{equation*}
\frac{\ud}{\ud t} \FP(\pr_t) \bigg\vert_{t=0}= \int_{\Re^d} \frac{\partial \FP}{\partial \pr}(\pr)(x) (-\nabla \cdot(\pr(x)u(x))) \ud x
\end{equation*} 
for all path $\pr_t$ in $\Pspace(\Re^d)$ such that $\frac{\partial \pr_t}{\partial t}=-\nabla \cdot(\pr_t u)$ with $\pr_0=\pr\in \Pspace(\Re^d)$.

\newP{Wasserstein gradient} The Wasserstein gradient of $\FP$ at $\pr$  is a vector-field on $\Re^d$ denoted as $\nabla_\pr \FP(\pr):\Re^d \to \Re^d$. It is defined as a vector-field that satisfies the identity  
\begin{equation*}
\frac{\ud}{\ud t} \FP(\pr_t) \bigg\vert_{t=0}= \int_{\Re^d} \nabla_\pr \FP(\pr)(x) \cdot u(x)~\pr(x)\ud x
\end{equation*} 
for all path $\pr_t$ in $\Pspace(\Re^d)$ such that $\frac{\partial \pr_t}{\partial t}=-\nabla \cdot(\pr_t u)$ with $\pr_0=\pr\in \Pspace(\Re^d)$.

The two definitions imply the following relationship~\cite[Prop. 5.48]{carmona2017probabilistic}:
\begin{equation*}
\nabla_\pr \FP(\pr)(\cdot) = \nabla_x \frac{\partial\FP}{\partial \pr } (\pr)(\cdot)
\end{equation*}

\newP{Example} Let $\FP(\pr)  = \int \log(\frac{\pr(x)}{\pr_\infty(x)})\pr(x)\ud x$ be the relative entropy functional. Consider a path  $\pr_t$ in $\Pspace(\Re^d)$ such that $\frac{\partial \pr_t}{\partial t}=-\nabla \cdot(\pr_t u)$ with $\pr_0=\pr\in \Pspace(\Re^d)$. Then
\begin{align*}
\frac{\ud}{\ud t} \FP(\pr_t)  &= \int \log(\frac{\pr_t(x)}{\pr_\infty(x)})\frac{\partial \pr_t}{\partial t}(x)\ud x  + \int \frac{\partial \pr_t}{\partial t}(x)\ud x\\&=  -\int \log(\frac{\pr_t(x)}{\pr_\infty(x)}) \nabla \cdot (\pr_t(x) u(x))\ud x\\
&= \int \nabla_x \log(\frac{\pr_t(x)}{\pr_\infty(x)})  \cdot u(x) ~ \pr_t(x) \ud x
\end{align*}
where the divergence theorem is used in the last step. The  definitions of the Gâteaux derivative and Wasserstein gradient imply 
\begin{align*}
\frac{\partial \FP}{\partial \pr}(\pr)(x) &= \log (\frac{\pr(x)}{\pr_\infty(x)})
\\
\nabla_\pr \FP(\pr)(x) & = \nabla_x \log (\frac{\pr(x)}{\pr_\infty(x)})
\end{align*}

\section{Relationship with the under-damped Langevin equation} \label{apdx:Langevin}
A basic form of the under-damped (or second order) Langevin equation is given in~\cite{cheng2017underdamped}
\begin{equation}
\begin{aligned}
\ud X_t &= v_t\ud t\\
\ud v_t &= - \gamma v_t\ud t  - \nabla f(X_t)\ud t +\sqrt{2}\ud B_t
\end{aligned}
\label{eq:under-dampled-Langevin}
\end{equation} 
where $\{B_t\}_{t\geq 0}$ is the standard Brownian motion. 

Consider next, the the accelerated flow~\eqref{eq:Hamilton-relative-entropy}. Denote $v_t := e^{\alpha_t-\gamma_t}Y_t$. Then, with an appropriate choice of scaling parameters (e.g. $\alpha_t=0$, $\beta_t=0$ and $\gamma_t=-\gamma t$ ): 
\begin{equation}
\begin{aligned}
\ud X_t &= v_t\ud t\\
\ud v_t &= - \gamma v_t\ud t  - \nabla f(X_t)\ud t - \nabla_x \log(\pr_t(X_t))
\end{aligned}
\label{eq:acc-flow-comparison}
\end{equation}

The scaling parameters are chosen here for the sake of comparison and do not satisfy the ideal scaling conditions of~\cite{wibisono2016}. 

The sdes~\eqref{eq:under-dampled-Langevin} and~\eqref{eq:acc-flow-comparison} are similar except that the stochastic term $\sqrt{2}\ud B_t$ in~\eqref{eq:under-dampled-Langevin} is replaced with a deterministic term  $-\nabla_x \log(\pr_t(X_t))$ in~\eqref{eq:acc-flow-comparison}. Because of this difference, the resulting distributions are different. Let $p_t(x,v)$ denote the joint distribution on $(X_t,v_t)$ of~\eqref{eq:under-dampled-Langevin} and let $q_t(x,v)$ denote the joint distribution on $(X_t,v_t)$ of~\eqref{eq:acc-flow-comparison}. Then the corresponding Fokker-Planck equations are: 
\begin{align*}
\frac{\partial p}{\partial t}(x,v)&= - \nabla_x \cdot(p_t(x,v)v) + \nabla_v \cdot(p_t(x,v)(\gamma v +\nabla f(x)))+\Delta_{v} p_t(x,v)\\
\frac{\partial q}{\partial t}(x,v)&=- \nabla_x \cdot(q_t(x,v)v) + \nabla_v \cdot(q_t(x,v)(\gamma v +\nabla f(x))) + \nabla_v \cdot (q_t(x,y)\nabla_x\log(\pr_t(x)))
\end{align*}
where $\pr_t(x)=\int q_t(x,v)\ud v$ is the marginal of $q_t(x,y)$ on $x$. The final term in the Fokker-Planck equations are clearly different. The joint distributions are different as well.

The situation is in contrast to the first order Langevin equation, where the stochastic term $\sqrt{2}\ud B_t$ and $-\nabla \log(\pr_t(X_t))$ are equivalent, in the sense that the resulting distributions have the same marginal distribution as a function of time. To illustrate this point, consider the following two forms of the Langevin equation:
\begin{align}
\ud X_t &= -\nabla f(X_t)\ud t + \sqrt{2}\ud B_t\label{eq:first-order-Langevin}\\
\ud X_t &= -\nabla f(X_t)\ud t - \nabla \log(\pr_t(X_t))\label{eq:first-order-Langevin-logp}
\end{align}
Let $p_t(x)$ denote the distribution of $X_t$ of~\eqref{eq:first-order-Langevin} and let $q_t(x)$ denote the distribution of $X_t$ of~\eqref{eq:first-order-Langevin-logp}. The corresponding Fokker-Planck equations are as follows
\begin{align*}
\frac{\partial p}{\partial t}(x)&= - \nabla \cdot(p_t(x)\nabla f(x)) + \Delta p_t(x)\\
\frac{\partial q}{\partial t}(x)&=- \nabla \cdot(q_t(x)\nabla f(x)) + \nabla \cdot (q_t(x)\nabla\log(\pr_t(x))) \\&=- \nabla \cdot(q_t(x)\nabla f(x)) + \nabla \cdot (q_t(x)\nabla\log(q_t(x))) \\&= - \nabla \cdot(q_t(x)\nabla f(x)) + \Delta q_t(x)
\end{align*}
where we used $\pr_t(x)=q_t(x)$. In particular, this implies that the marginal probability distribution of the stochastic process $X_t$ are the same for first order Langevin sde~\eqref{eq:first-order-Langevin} and~\eqref{eq:first-order-Langevin-logp} .
%equal, because the corresponding Fokker-Planck equations are equal. 

\end{document}